\theoremstyle{definitions}
\newtheorem{theorem}{Theorem}[section]
\newtheorem{lemma}{Lemma}[section]
\newtheorem{proposition}{Proposition}[section]
\newtheorem{corollary}{Corollary}[section]
\newcommand\cip{{i\!p\!}^{\times}}
\newcommand\sign{\mathrm{sign}}
\newcommand\R{\mathbb{R}}
\newcommand\nor{\mathcal{N}}
\newcommand\V{\mathrm{v}}
\newcommand\il[1]{\langle #1 \rangle}
\newcommand\minusOne{{_{-}}}
\newcommand\plusOne{{_{+}}}
\newcommand\pmp{+\!-\!+}
\newcommand\dist{\mathrm{d}}
\newcommand\margin{\mathrm{M}}
\newcommand\svm{\mathrm{_{SVM}}}
\newcommand\RMC{MELC}
\newcommand\for{\text{ for }}
\newcommand\de[1]{\lsem #1 \rsem}
\begin{document}
%
\title{Multithreshold Entropy Linear Classifier}
%
%
%
%

\author{Wojciech Marian Czarnecki,
        Jacek Tabor\\
\small{Faculty of Mathematics and Computer Science,}\\
\small{Jagiellonian Unviersity, Krakow, Poland.}\\
\small{\{wojciech.czarnecki, jacek.tabor\}@uj.edu.pl}
}
\maketitle

\begin{abstract}

Linear classifiers separate the data with a hyperplane. In this paper we focus on the novel method of construction of multithreshold linear classifier, which separates the data with multiple parallel hyperplanes. Proposed model is based on the information theory concepts -- namely Renyi's quadratic entropy and Cauchy-Schwarz divergence. 

We begin with some general properties, including data scale invariance. Then we prove that our method is a multithreshold large margin classifier, which shows the analogy to the SVM, while in the same time works with much broader class of hypotheses. What is also interesting, proposed method is aimed at the maximization of the balanced quality measure (such as Matthew's Correlation Coefficient) as opposed to very common maximization of the accuracy. This feature comes directly from the optimization problem statement and is further confirmed by the experiments on the UCI datasets.

It appears, that our Entropy Multithreshold Linear Classifier (\RMC) obtaines similar or higher scores than the ones given by SVM on both synthetic and real data. We show how proposed approach can be benefitial for the cheminformatics in the task of ligands activity prediction, where despite better classification results, \RMC{} gives some additional insight into the data structure (classes of underrepresented chemical compunds).

\end{abstract}

\section{Introduction}

Linear classifiers (SVM, perceptron, LDA, logistic regression) aim to find $v \in \R^d$ and $b \in \R$ such that the decision on the class of $x$ is based
on 
\begin{equation} \label{eq1}
\sign(v^Tx -b).
\end{equation}
The linear classification is important as it has
the advantage of small VC dimension. 
 The same ideas can be seen behind the neural networks (and their modifications like
Extreme Learning Machines~\cite{huang2011extreme} or Deep Learning~\cite{hinton2006fast}), where the activation
of the single neuron is given by \eqref{eq1}, while the role
played by it in the whole decision process is usually given by
%
\par
\medskip

{ \begin{tabular}{l}
STEP 1: calculate $v^Tx$, \\
STEP 2: make decision based on the sign of $v^Tx-b$.
\end{tabular}}
\medskip
\par
Although the linear classification is usually very efficient, even for the simple
sets in $\R$, like $\pmp$, see Figure~\ref{fig:pmp}, we
cannot obtain sufficient classification results. This
led to the need for kernelization procedure~\cite{cortes1995support}.

\begin{figure}[htb]
\begin{center}
\includegraphics[height=2.8cm]{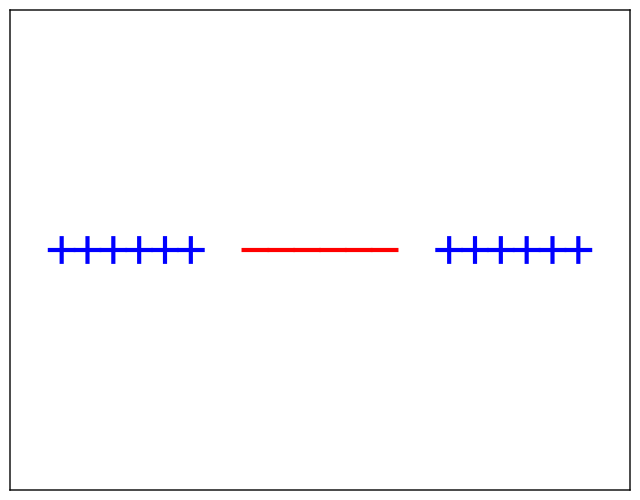}
\includegraphics[height=2.8cm]{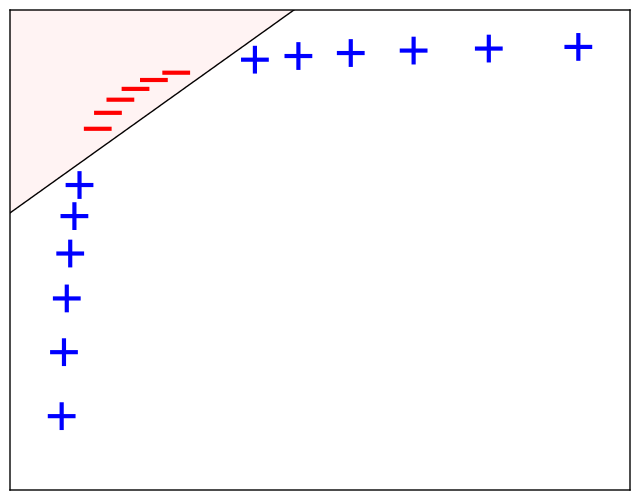}
\includegraphics[height=2.8cm]{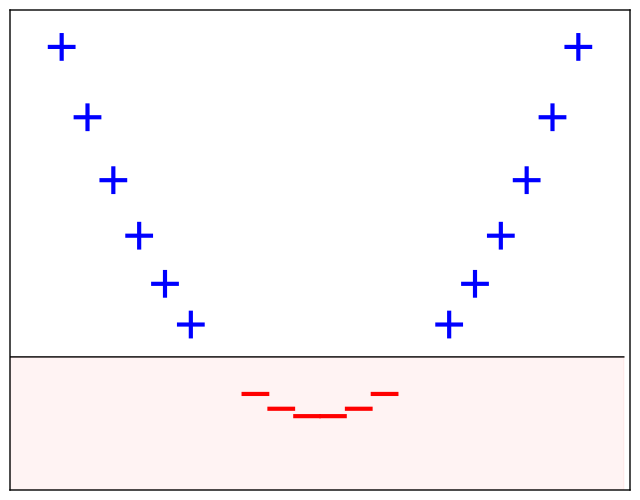}
\end{center}

\caption{From left: $\pmp$ dataset (linearly non-separable), $\pmp$ dataset in trained neural network with 2 hidden nodes with sigmoid activation functions, $\pmp$ dataset in trained SVM with polynomial kernel of degree 2}
\label{fig:pmp}
\end{figure}

Our postulate is that by applying the second step
we often lose some of the information given by the first
one -- observe that both in $\pmp$ or XOR case we 
can make sufficiently good classification decision 
based on the knowledge of the value of $v^Tx$ (for well
chosen $v$), see Figure~\ref{fig:toy}.
One can therefore ask why we do not use the
additional information? One of the possible answers
lies in the fact that most classification methods,
like SVM, aim at building a ``large'' linear margin between 
classes, which in a natural way leads to the single-threshold decision
boundary.

Thus there appears a natural question if we
can construct a classification method which would
 find the projection 
$x \to v^T x$ which could directly deal with
more complex classification cases like $\pmp$ and XOR.
The problem in fact splits into two -- how to
find the right $v \in \R^d$ and how to make the proper
classification decision in $\R$. The answer for the second
question is given by multithreshold linear classifiers~\cite{cao1994comparative},
where instead of decision based on the split of $\R$ into $(-\infty,b)$
and $[b,\infty)$ the division into finite number of 
intervals is allowed\footnote{This type of classification can be obtain in particular by the density based classifiers in $\R$.}.

The answer to the first question is nontrivial, and in our opinion
there could be many reasonable solutions. In this paper
we have decided to base the
decision on entropy-based divergence measure~\cite{principe2000information}. We have
chosen the {\em Renyi's quadratic entropy} 
$$
H_2(f):=-\log \int f^2
$$
and the connected {\em Cauchy-Schwarz divergence}
\begin{equation} \label{eq:DCS}
\begin{array}{l}
D_{CS}(f,g):=\log \int f^2 +\log \int g^2 -2\log \int fg \\[1ex]
=-\left (H_2(f)+H_2(g)+2 \log \cip(f,g) \right ),
\end{array}
\end{equation}
where $\cip(f,g):=\int fg$ denotes the {\em cross-information potential}. Our reasons behind such a
choice are the following:
\begin{itemize} 
\item Renyi entropy and the Cauchy-Schwarz divergence are easily computable and the exact formulas for the Gaussian mixtures are known (this allows
the use of gradient methods in our optimization problem, see Pracitcal Considerations Section),
\item the Cauchy-Schwarz divergence is translation and scale invariant in terms of input data transformation,
\item $D_{CS}$ has nice theoretical properties, as the minimization of 
$\cip$ leads to the maximization of the multi-threshold
boundary\footnote{To some extent we obtain multi-threshold analogue of large margin classifier.}, while the part consisting 
of Renyi's entropies adds the regularizing term, see Theory Section.
\end{itemize}

From the practical point of view, we first project the
data by $v^T$ onto $\R$, and apply there the classical kernel density
estimation given for the dataset $P \subset \R$ by
%
\begin{equation}
\de{P}_{\sigma}:=\frac{1}{|P|}\sum_{p \in P}\nor(p,\sigma^2),
\end{equation}
where $\nor(p,\sigma^2)$ denotes the one dimensional normal distribution.  We skip the subscript $\sigma$ (which denotes the window width) if it is chosen according to
the Silverman's rule~\cite{silverman1986density}
\begin{equation} \label{eq:sr}
\sigma=(4/3)^{1/5}|P|^{-1/5}\sigma_P, 
\end{equation}
where $\sigma_P$ denotes the standard deviation of the data $P$.
Only later we calculate the Cauchy-Schwarz divergence.
It is important to notice that our method performs density estimation in one-dimensional space $\R$. It is a common knowledge
that density estimation in high dimensions is unreliable (requries enormous amount of samples), which is one of the reasons
why purely density based classification is rarely used. In particular, even in the simplest case when data comes from multivariate
normal distribution and we are interested in the good estimation of the value at $0$, we need over $10,000$ samples for just 
7 dimensions~\cite{silverman1986density}. On the other hand, in one dimension we just need 4 samples (to obtain a solution with $0.1$ precision in terms of
mean squared error). 
This supports 
the idea behind creation of models based on 1-dimensional linear projections as they provide reliable estimation of the underlying 
densities.


Consequently our final optimization problem can be formulated as follows:
\medskip

\noindent{\bf Optimization problem.}
{\em
Consider classes $X_{\plusOne}$ and $X \minusOne$ in $\R^d$.
Find nonzero $v \in \R^d$ which maximizes the value of 
$$
D_{CS}(\de{v^TX_{\plusOne}},\de{v^TX_{\minusOne}}).
$$
}

The resulting multithreshold classifier is constructed from the 
density estimations $\de{v^TX_{\plusOne}}$ and $\de{v^TX_{\minusOne}}$.
Observe that, contrary to SVM, in our basic method we do not have any free parameters. 

As it is shown in the Evaluation Section, such model usually obtains similar or better classification quality than the linear SVM. It occurs that in practice due to the strong regularization proposed method selects quite small number of thresholds (which reduces the VC dimension~\cite{anthony2004generalization} of the resulting model). In fact, when using Silverman's rule for kernel window width estimation, our method built a single threshold model in nine out of ten UCI datasets. It is worth noting that these solutions are significantly different from the ones given by SVM so, even though their scores are similar, proposed method is fundamentally different and therefore gives additional knowledge of the problem.

The interesting practical applications of multithreshold model is the more detailed insight into data geometry. Let us consider the task of ligands activity prediction for given proteins (which is further described in the Evaluation Section). Figure~\ref{fig:mlc} shows results of kernel density estimation for one of the obtained models for cathepsin ligands \cite{smusz2013influence}. 
\begin{figure}[hbt]
 \begin{center}
 \includegraphics[width=0.6\textwidth]{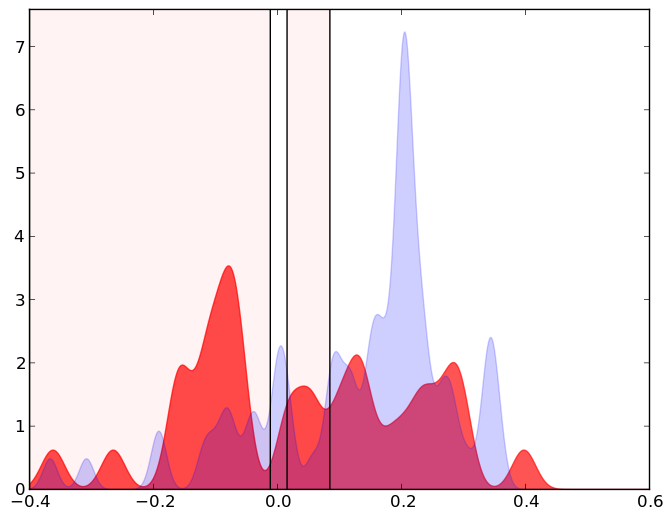}
 \caption{ Kernel density estimation of the linear projection of one of the folds of cathepsin ligands detection task using proposed multithreshold linear classifer for the test set. 
 The negative class spans through $x$ such that $v^Tx \in ( \infty, -0.02] \cup (0.02, 0.09]$ and the positive one through $x$ such that $v^Tx \in (-0.02,0.02] \cup (0.09,\infty) $}
 \label{fig:mlc}
  \end{center}
\end{figure}
One can notice how multithreshold classifier exploits the internal structure of the data by capturing small group of data points which is a part of the different class which would be ignored in linear classification. This results in the significant increase in the classification quality compared to the commonly encountered in this domain SVM model. This shows how the proposed model is able to exploit additional knowledge from the simple linear data projection. In the case of cheminformatics domain this typically represents some specific group of 
compounds\footnote{The main aim of this kind of research is identifying new drugs and2 compounds which are biologically active.}, distinctive from the most popular active ones (positive samples) and therefore is especially worth investigation\footnote{Exploiting such underrepresented groups of molecules might shed light on the currently under researched structural classes and lead to discovery of new types of drugs.}.

To sum up the \RMC{} (Multithreshold Entropy Linear Classifier) 
 has the following advantages:
\begin{itemize}
 \item has strong theoretical backgroud based on Information Theory,
 \item can build both single- and multithreshold linear classifiers,
 \item maximizes the balanced quality measure (is class imbalance invariant),
 \item is scale invariant (requires no data scaling),
 \item directly gives not only classification but also its likelihood (without the need for Platt's scaling),
 \item behaves well as the parameter-free model,
 \item although it tries to maximize the margins it builds significantly different model than SVM,
 \item can be parametrized to better fit data, and this free parameter has clear geometrical intuition.
\end{itemize}
Its current biggest drawback is computational complexity and existance of local solutions. 

Let us now briefly describe the contents of the paper. After short analysis of related work we show the basic properties of Cauchy-Schwarz divergence including its scale invariance and solutions for normaly distributed data. Next, we prove that proposed model maximizes the margins' sizes of multithreshold linear classifier and that the entropy terms play the regularization role. Then we proceed to some practical considerations regarding optimization procedure, its implementation and possible drawbacks. We conclude with the evaluation based on both synthethic and real datasets.

\section{Related work}



Multihreshold linear classifiers are present in machine learning for a long time \cite{takiyama1978multiple,olafsson1988capacity}, however they did not receive as much attention as the single threshold ones. One of the reasons may be hardness of their theoretical analysis and lack of answers for very basic question like their exact Vapnik-Chervonenkis dimension~\cite{anthony2003learning}. On the other hand Anthony et al. \cite{anthony2004generalization} recently showed some bounds regarding this class od models. However, efficient training of such models remains an open issue~\cite{anthony2003learning}.

As we will show, our method is strongly related to the Support Vector Machines concept~\cite{cortes1995support} or more generally large margin classifier idea~\cite{huang2008maxi,freund1999large,tipping2003relevance}. It is worth noting that we are not presenting a modification of SVM model (dozens of which appeared in recent years) but rather propose a conceptually different approach which leads to some important similarities.

Renyi's entropy has been deeply analyzed in the recent book by Principe et al.~\cite{principe2000information}, showing its wide applications spanning from classification optimization criterion~\cite{principe-class}, through clustering techniques~\cite{principe-clust} to ICA and other self-organizing methods~\cite{principe-self}. Use of Cauchy-Schwarz divergence for the classification criterion has been investigated in the past, in particular for a simple multilayer neural networks~\cite{santos2004error}. However, to the authors best knowledge, it has not yet been used as a criterion for the choice of one-dimensional linear projection used for density-based classification. 

In the broader sense, we are employing techniques from the information theory, which have been applied for construction of Decision Trees and, very successful model from 2001, Random Forest~\cite{breiman2001random}. On the other hand, density estimation based models have been recently used as the base of Deep Learning architectures~\cite{hinton2006fast} and proved to be a very good data processing technique.

  \section{Cauchy-Schwarz divergence}

In this section we discuss the basic theoretical aspects of the
the Cauchy-Schwarz divergence.
We show that the it is
insensitive to the change of scale, which consequently
yields that that we can restrict search to the unit
sphere $S:=\{v \in \R^d: \|v\|=1\}$. Next we discuss the case of normal distributions.


\subsection{Scale invariance}

We are going to show that the Cauchy-Schwarz divergence is scale invariant.
Observe that for $f,g:\R \to \R_+$ 
$$
D_{CS}(f,g)=-2 \log (\int \frac{f}{\|f\|_2} \frac{g}{\|g\|_2}),
$$
where $\|f\|_2$ denotes the $L^2$-norm of $f$. This implies that
$$
D_{CS}(\alpha f,\beta g)=D_{CS}(f,g) \text{ for }\alpha,\beta>0,
$$
which means that in the use of the Cauchy-Schwarz
divergence we do not have to normalize the data.

We show that $D_{CS}$ does not depend on the change of scale.
To do so we need the following notation:
for density $f$ in $\R$, we put 
$$
R_{\alpha}f(x):=\frac{1}{|\alpha|}f(x/\alpha).
$$
Observe that if $P \subset 
\R$ comes from the density $f$, then $\alpha P$ was generated from the density $R_{\alpha f}$. In other words the operation $R_\alpha$
corresponds (for the densities) to the operation of
rescaling the data by $\alpha\neq 0$.

\begin{lemma}
Consider densities $f,g$ in $\R$ and $\alpha \neq 0$. Then
$$
D_{CS}(f,g)=D_{CS}(R_\alpha f,R_\alpha g).
$$
\end{lemma}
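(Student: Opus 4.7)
The plan is a direct change of variables argument, exploiting the fact that the three integrals appearing in $D_{CS}$ all scale by the same factor under $R_\alpha$, and the coefficients $+1, +1, -2$ in the definition sum to zero.

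First I would compute, via the substitution $y = x/\alpha$ (so $dx = |\alpha|\,dy$), the three quantities $\int (R_\alpha f)^2$, $\int (R_\alpha g)^2$, and $\int (R_\alpha f)(R_\alpha g)$. For instance
\[
\int (R_\alpha f)(x)^2\,dx \;=\; \int \frac{1}{\alpha^2} f(x/\alpha)^2\,dx \;=\; \frac{1}{|\alpha|}\int f(y)^2\,dy,
\]
and an identical substitution yields the analogous equalities
\[
\int (R_\alpha g)^2 = \frac{1}{|\alpha|}\int g^2, \qquad \int (R_\alpha f)(R_\alpha g) = \frac{1}{|\alpha|}\int fg.
\]

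Next I would substitute these three identities into the defining formula
\[
D_{CS}(R_\alpha f, R_\alpha g) = \log\!\int (R_\alpha f)^2 + \log\!\int (R_\alpha g)^2 - 2\log\!\int (R_\alpha f)(R_\alpha g).
\]
Every log splits as $\log\frac{1}{|\alpha|}$ plus the log of the corresponding $\alpha$-free integral. The $\log\frac{1}{|\alpha|}$ contributions appear with total coefficient $1+1-2 = 0$, so they cancel, and the remaining terms are exactly $D_{CS}(f,g)$.

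There is no real obstacle here — the lemma is essentially the observation that $D_{CS}$ is built to be invariant under common multiplicative rescaling of its arguments (as already noted via the identity $D_{CS}(\alpha f,\beta g)=D_{CS}(f,g)$ in the excerpt), and the operator $R_\alpha$ rescales $\int f^2$, $\int g^2$, $\int fg$ by a common factor. Alternatively one can give a one-line derivation from the normalized form $D_{CS}(f,g)=-2\log\int\frac{f}{\|f\|_2}\frac{g}{\|g\|_2}$ by noting that $\|R_\alpha f\|_2 = |\alpha|^{-1/2}\|f\|_2$, so that $\frac{R_\alpha f}{\|R_\alpha f\|_2}(x) = |\alpha|^{-1/2}\frac{f(x/\alpha)}{\|f\|_2}$, and a single change of variables shows the inner product is preserved; I would likely include this as a short remark after the main computation.
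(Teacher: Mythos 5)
Your proposal is correct and follows essentially the same route as the paper: a change of variables showing that $\int (R_\alpha h)(R_\alpha \tilde h) = \frac{1}{|\alpha|}\int h\tilde h$ for each of the three integrals, followed by the observation that the resulting $\log|\alpha|$ terms cancel because the coefficients $1+1-2$ sum to zero. The additional remark via the normalized form $D_{CS}(f,g)=-2\log\int\frac{f}{\|f\|_2}\frac{g}{\|g\|_2}$ is a nice complement but not needed.
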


\begin{proof}
One can easily see that
$$
\int R_\alpha h(x) R_\alpha\tilde h(x) dx=
\frac{1}{\alpha^2} \int h(x/\alpha) \tilde h(x/\alpha)dx
$$
$$
\stackrel{u=x/\alpha}{=}\frac{1}{|\alpha|}\int h(u)\tilde h(u)du.
$$
Applying the above we obtain that
$$
\begin{array}{l}
D_{CS}(R_\alpha f,R_\alpha g)\\
=\log \int (R_\alpha f)^2 +\log \int (R_\alpha g)^2-
2 \log \int R_\alpha f R_\alpha g\\
=\log \int f^2-\log |\alpha| +\log \int g^2-\log |\alpha| - 2 \log \int f g+2\log|\alpha|\\
=D_{CS}(f,g).
\end{array}
$$
\end{proof}

We obtain the following corollary as a direct consequence of the previous lemma and the fact that $\de{ \alpha P }_{\alpha r}=R_\alpha\de{P}_{r}$.

\begin{corollary}
Let $P_\plusOne,P_\minusOne \subset \R$ be given. Then
$$
D_{CS}(\de{\alpha P_\plusOne}_{\alpha r},\de{\alpha P_\minusOne}_{\alpha s})=D_{CS}(\de{P_\plusOne}_r,\de{P_\minusOne}_s).
$$
\end{corollary}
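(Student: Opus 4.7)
The plan is to combine the scale-invariance Lemma directly proven just above with a single observation about how the operator $R_\alpha$ interacts with the kernel density estimator $\de{\cdot}_\sigma$. The statement of the corollary essentially says that if we rescale both datasets by $\alpha$ and simultaneously rescale the bandwidths by $\alpha$, the resulting Cauchy--Schwarz divergence is unchanged. Since the Lemma already gives that $D_{CS}(R_\alpha f, R_\alpha g) = D_{CS}(f,g)$ for arbitrary densities $f,g$ on $\R$, all I need is to show that $\de{\alpha P}_{\alpha \sigma} = R_\alpha \de{P}_\sigma$, so that the left-hand side of the corollary matches the input to the Lemma.

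First I would verify the identity $\de{\alpha P}_{\alpha \sigma} = R_\alpha \de{P}_\sigma$, which is the sentence the authors insert right before the corollary but do not explicitly prove. The key computation is on a single Gaussian kernel: using the explicit form of $\nor(p,\sigma^2)$ one checks that
$$
R_\alpha \nor(p,\sigma^2)(x) \;=\; \tfrac{1}{|\alpha|}\nor(p,\sigma^2)(x/\alpha) \;=\; \nor(\alpha p,\alpha^2 \sigma^2)(x).
$$
Linearity in the kernel summation then gives
$$
R_\alpha \de{P}_\sigma \;=\; \tfrac{1}{|P|}\sum_{p\in P} R_\alpha \nor(p,\sigma^2) \;=\; \tfrac{1}{|\alpha P|}\sum_{q \in \alpha P}\nor(q,\alpha^2\sigma^2) \;=\; \de{\alpha P}_{\alpha \sigma}.
$$

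Second, with this identity in hand the corollary reduces to a one-line application of the Lemma: setting $f := \de{P_\plusOne}_r$ and $g := \de{P_\minusOne}_s$, we have $\de{\alpha P_\plusOne}_{\alpha r} = R_\alpha f$ and $\de{\alpha P_\minusOne}_{\alpha s} = R_\alpha g$, so
$$
D_{CS}(\de{\alpha P_\plusOne}_{\alpha r}, \de{\alpha P_\minusOne}_{\alpha s}) \;=\; D_{CS}(R_\alpha f, R_\alpha g) \;=\; D_{CS}(f,g) \;=\; D_{CS}(\de{P_\plusOne}_r, \de{P_\minusOne}_s).
$$

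There is no real obstacle here; the only subtle point worth a brief check is making sure the case $\alpha<0$ behaves correctly. The absolute value $|\alpha|$ in the definition of $R_\alpha$, together with the fact that Gaussians are symmetric, ensures the identity $R_\alpha \nor(p,\sigma^2) = \nor(\alpha p, \alpha^2 \sigma^2)$ holds for negative $\alpha$ as well, so the argument goes through for any $\alpha \neq 0$ as claimed.
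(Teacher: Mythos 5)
Your proof is correct and follows exactly the route the paper intends: the paper dismisses the corollary as ``a direct consequence of the previous lemma and the fact that $\de{\alpha P}_{\alpha r}=R_\alpha\de{P}_r$,'' and you have simply supplied the (correct) one-line Gaussian computation verifying that identity before invoking the lemma. No issues.
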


Since $\sigma_{\alpha P}=|\alpha| \sigma_P$, we obtain 
by the Silverman's rule \eqref{eq:sr} that 
$$
\de{\alpha P}=R_{\alpha} \de{P} \for P \subset \R.
$$
This implies that the Cauchy-Schwarz
divergence of the data projection does not depend on the rescaling of the data:
\begin{equation} \label{eq:rescal}
D_{CS}(\de{v^T(\alpha X_\plusOne)},\de{v^T(\alpha X_\minusOne)})=
D_{CS}(\de{v^T X_\plusOne},\de{v^TX_\minusOne})
\end{equation}
for $v \in \R^d$, $\alpha \neq 0$. Consequently, in its
maximization process we can restrict to the unit sphere.

Finally, we arrive at:

\begin{theorem} \label{th:scale}
Let $A:\R^d \to \R^d$ be a linear invertible map. Then

$$
\begin{array}{l}
\sup\{D_{CS}(\de{v^T (AX_\plusOne)},\de{v^T(AX_\minusOne)}):  v \in S\} \\[0.5ex]
=
\sup\{D_{CS}(\de{v^T X_\plusOne},\de{v^TX_\minusOne}):  v \in S\}.
\end{array}
$$
\end{theorem}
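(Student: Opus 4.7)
My approach is to transport the projection vector through the linear map $A$ using its transpose. The key algebraic identity $v^T(Ax) = (A^T v)^T x$ lets me rewrite
$$D_{CS}(\de{v^T(AX_\plusOne)}, \de{v^T(AX_\minusOne)}) = D_{CS}(\de{(A^T v)^T X_\plusOne}, \de{(A^T v)^T X_\minusOne}).$$
Hence the left-hand supremum in the statement is a supremum of $D_{CS}(\de{w^T X_\plusOne}, \de{w^T X_\minusOne})$ as $w$ ranges over the ellipsoid $A^T(S)$, which is not in general the unit sphere.

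The remedy is the scale invariance already established in~\eqref{eq:rescal}: because $v^T(\alpha X) = (\alpha v)^T X$, the quantity $D_{CS}(\de{w^T X_\plusOne}, \de{w^T X_\minusOne})$ is homogeneous of degree zero in nonzero $w$ and therefore depends only on the direction of $w$. Applying this with $w = A^T v$, which is nonzero for $v \in S$ because $A^T$ is invertible, I may replace $A^T v$ by its normalization $A^T v / \|A^T v\| \in S$ without changing the value of $D_{CS}$.

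The remaining ingredient is the observation that the map $\phi : v \mapsto A^T v / \|A^T v\|$ sends $S$ onto $S$. Surjectivity is the only nontrivial step: given any target $w \in S$, setting $u = (A^T)^{-1} w$ and $v = u/\|u\|$ gives, after a short calculation, $\phi(v) = w$. Combining the three ingredients, both sides of the claimed equality reduce to
$$\sup\{D_{CS}(\de{w^T X_\plusOne}, \de{w^T X_\minusOne}) : w \in S\},$$
which proves the theorem. I do not anticipate any real obstacle; the content of the proof is the remark that taking the supremum of a direction-homogeneous functional over the unit sphere is itself invariant under preimaging by the invertible map $A^T$.
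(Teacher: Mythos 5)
Your proposal is correct and follows essentially the same route as the paper: the identity $v^T(Ax)=(A^Tv)^Tx$, the degree-zero homogeneity of the functional in $v$ coming from \eqref{eq:rescal}, and the fact that $A^T$ (after normalization) carries $S$ onto $S$, which the paper phrases as a two-sided "dually, for arbitrary $w\neq 0$" argument. Your write-up is in fact slightly more careful than the paper's, which leaves the normalization and surjectivity steps implicit (and contains a typo, writing $v=(A^{-1})^Tv$ where $v=(A^{-1})^Tw$ is meant).
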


\begin{proof}
Let $v \neq 0$ be arbitrarily fixed and let $w=A^Tv$. Then 
$$
v^T(AX)=(A^Tv)^TX=w^TX,
$$
which implies that 
$$
D_{CS}(\de{v^T (AX_\plusOne)},\de{v^T(AX_\minusOne)}) 
=D_{CS}(\de{w^T X_\plusOne},\de{w^TX_\minusOne}).
$$
Dually, for an arbitrary $w \neq 0$ by putting $v=(A^{-1})^Tv$, we get
$$
D_{CS}(\de{w^T X_\plusOne},\de{w^TX_\minusOne}) 
=D_{CS}(\de{v^T (AX_\plusOne)},\de{v^T(AX_\minusOne)}).
$$
The assertion of the theorem follows directly from the above
inequalities and \eqref{eq:rescal}.
\end{proof}

It is easy to notice that analogously one can show that $D_{CS}$ is translation 
invariant.

\subsection{Data with Gaussian distribution}

We proceed to the 
case when the data was generated from the normal distribution.
Although in practice the datasets are discrete, we perform the calculations
on the original continuous distributions (in next section we obtain 
approximation of the densities which were used to generate the data
by gaussian kernel density estimation).

Let us recall that the multivariate normal density $\nor(m,\Sigma)$ 
in $\R^d$ with mean $m$ and covariance matrix $\Sigma$ is given by
$$
\nor(m,\Sigma)(x)=\frac{1}{(2\pi)^{d/2}(\det\Sigma)^{1/2}}
\exp(-\frac{1}{2}\|x-m\|^2_{\Sigma}),
$$
where $\|\cdot\|_{\Sigma}$ denotes the Mahalanobis norm given by
$\|x\|^2_{\Sigma}=x^T\Sigma^{-1}x$.

In our considerations we will use the following well-known~\cite{statystyka} formula
for the scalar product of two normal densities:
\begin{equation} \label{eq:scalar}
\int \nor(m_1,\Sigma_1) \nor(m_2,\Sigma_2)=
\nor(m_1-m_2,\Sigma_1+\Sigma_2)(0).
\end{equation}
Observe that from the above we easily conclude the 
value of the Renyi's quadratic entropy of the 
normal density:
\begin{equation} \label{eq:renyi}
\begin{array}{l}
H_2(\nor(m,\Sigma))=-\log(\nor(0,2\Sigma)(0)) \\[0.5ex]
\displaystyle{=\frac{d}{2}\log(4\pi)+\frac{1}{2}\log \det \Sigma.}
\end{array}
\end{equation}

\begin{theorem}
Let us consider the data $X$ which was generated from the normal density $\nor(m,\Sigma)$. Then the function 
$$
S \ni v \to H_2(v^T X)
$$
attains maximum for $v$ being the eigenvector corresponding to the 
maximal eigenvalue of $\Sigma$.
\end{theorem}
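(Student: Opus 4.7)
The plan is to reduce the statement to a classical Rayleigh quotient maximization. First I would observe that for $X$ distributed as $\mathcal{N}(m,\Sigma)$ in $\mathbb{R}^d$, the linear projection $v^T X$ is a one-dimensional Gaussian with mean $v^T m$ and variance $v^T \Sigma v$; this is just the standard fact about affine images of a multivariate normal, and the mean is irrelevant for the entropy.

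Next I would apply formula \eqref{eq:renyi} in the one-dimensional case $d=1$, $\Sigma = v^T \Sigma v$, obtaining
$$
H_2(v^T X) = \tfrac{1}{2}\log(4\pi) + \tfrac{1}{2}\log(v^T \Sigma v).
$$
Since $\log$ is strictly increasing and the additive constant does not depend on $v$, maximizing $H_2(v^T X)$ over $v \in S$ is equivalent to maximizing the Rayleigh quotient $v^T \Sigma v$ subject to $\|v\|=1$.

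Finally I would invoke the standard spectral fact that the constrained maximum of $v^T \Sigma v$ on the unit sphere is $\lambda_{\max}(\Sigma)$, attained precisely at the unit eigenvectors associated with the largest eigenvalue of $\Sigma$; this can be proved in one line by diagonalizing $\Sigma$ in an orthonormal basis or by a Lagrange multiplier argument, which shows that critical points on $S$ satisfy $\Sigma v = \lambda v$.

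There is essentially no obstacle here; the only minor subtlety worth mentioning is that one should briefly justify why the continuous formula \eqref{eq:renyi} is applied to the projected data rather than to its empirical kernel density estimate. This is consistent with the convention adopted at the start of the subsection, where the authors explicitly state that the calculations are performed on the original continuous distributions rather than on the discrete sample.
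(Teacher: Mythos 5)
Your proof is correct and follows essentially the same route as the paper: project to get $\nor(v^Tm, v^T\Sigma v)$, apply \eqref{eq:renyi} to reduce the problem to maximizing $v^T\Sigma v$ on the unit sphere, and resolve that by diagonalization or Lagrange multipliers (the paper does both, diagonalizing $\Sigma$ and then applying Lagrange multipliers to $\sum_i \lambda_i v_i^2$). No gaps; your remark about working with the continuous distributions rather than the kernel estimates matches the convention the paper states at the start of that subsection.
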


\begin{proof}
One can easily check that since $X$ has the density $\nor(m,\Sigma)$, the projection $v^TX$ of $X$ onto $\R$ has the density
$$
\nor(v^Tm,v^T\Sigma v),
$$
and therefore by \eqref{eq:renyi}
$$
H_2(v^T X)
=\frac{1}{2}\log(4\pi)+\frac{1}{2}\log(v^T\Sigma v)
\text{ for }v \in S.
$$
Consequently to maximize the Renyi's entropy we have
to maximize the value of $v^T\Sigma v$. To do so, let us
take as the base of $\R^d$ the orthonormal vectors $f_1,\ldots,f_d$ which diagonalize $\Sigma$, ordered decreasingly according to the eigenvalues 
$\lambda_1 \geq \ldots \geq \lambda_d$ of $\Sigma$. Then
clearly 
\begin{equation} \label{eq:constr}
v^T\Sigma v=\sum_{i=1}^d \lambda_i v_i^2, 
\end{equation}
where $v$ has the coefficients $v_1,\ldots,v_d$ in the considered base.
Now one can easily verify by applying Lagrange multipliers that \eqref{eq:constr} under the condition $\|v\|^2=v_1^2+\ldots+v_d^2=1$ is maximized for $v$
which has coefficients $1,0,\ldots,0$ in the base $f_1,\ldots,f_d$, which
means exactly that the maximum is attained for $v=f_1$.
\end{proof}

Observe that the above result says that
the information is minimal when the projection is such that
the resulting density has the smallest possible variance (or in other
words when it is maximally concentrated).

To present intuition concerning the Cauchy-Schwarz divergence and
information potential we will consider the case of two classes
with covariances proportional to identity. The result says that the crucial in this case is the projection
onto line going through the means of both groups. Observe
that this coincides with our intuition concerning the discrimination of those
groups. Moreover, as we show in the next section, an analogous
result holds for the limiting case of arbitrary sets.

Let us recall that if the data $X$ was generated
according to the distribution $\nor(m,\Sigma)$, then 
$v^TX$ comes from the distribution $\nor(v^Tm,v^T\Sigma v)$.
Consequently, if $\Sigma=\alpha I$ and $\|v\|=1$ then 
$v^TX$ has the distribution $\nor(v^Tm,\alpha)$.

\begin{theorem} \label{th:disc}
Let $X_\plusOne,X_{\minusOne}$ be data generated by the normal densities $\nor(m_\plusOne,\alpha_\plusOne I)$ and $\nor(m_\minusOne,\alpha_\minusOne I)$ with different means $m_\plusOne \neq m_\minusOne$.
Then the maximum of 
$$
S \ni v \to D_{CS}(\nor(v^Tm_\plusOne,\alpha_\plusOne),\nor(v^Tm_\minusOne,\alpha_\minusOne))
$$ 
and simultaneously minimum of 
$$
S \ni v \to \cip(\nor(v^Tm_\plusOne,\alpha_\plusOne),\nor(v^Tm_\minusOne,\alpha_\minusOne))
$$
is attained for 
\begin{equation} \label{eq:proj}
v=\pm \frac{m_\plusOne-m_\minusOne}{\|m_\plusOne-m_\minusOne\|}
\end{equation}
\end{theorem}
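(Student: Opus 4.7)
The plan is to observe that for $v \in S$ the variance of the one-dimensional projection $v^T X_\pm$ equals $v^T (\alpha_\pm I) v = \alpha_\pm$, which does \emph{not} depend on $v$. By formula \eqref{eq:renyi} (in dimension $d=1$), the Renyi entropies $H_2(\nor(v^Tm_\plusOne,\alpha_\plusOne))$ and $H_2(\nor(v^Tm_\minusOne,\alpha_\minusOne))$ therefore reduce to constants independent of $v$. Recalling the decomposition
$$
D_{CS}(f,g) = -\bigl(H_2(f)+H_2(g)+2\log \cip(f,g)\bigr),
$$
this already shows that maximizing $D_{CS}$ over the sphere is equivalent to minimizing $\cip$ over the sphere, giving the ``simultaneously'' part of the statement essentially for free.

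Next I would compute $\cip$ explicitly using the scalar product formula \eqref{eq:scalar} in dimension one:
$$
\cip\bigl(\nor(v^Tm_\plusOne,\alpha_\plusOne),\nor(v^Tm_\minusOne,\alpha_\minusOne)\bigr)
=\nor\bigl(v^T(m_\plusOne-m_\minusOne),\alpha_\plusOne+\alpha_\minusOne\bigr)(0),
$$
which equals
$$
\frac{1}{\sqrt{2\pi(\alpha_\plusOne+\alpha_\minusOne)}}
\exp\Bigl(-\tfrac{(v^T(m_\plusOne-m_\minusOne))^2}{2(\alpha_\plusOne+\alpha_\minusOne)}\Bigr).
$$
Since the prefactor is a positive constant and the exponent is monotone decreasing in $(v^T(m_\plusOne-m_\minusOne))^2$, minimizing $\cip$ on $S$ reduces to maximizing
$$
v \mapsto (v^T(m_\plusOne-m_\minusOne))^2, \quad v \in S.
$$

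The final step is a one-line Cauchy--Schwarz estimate: for $v\in S$,
$$
(v^T(m_\plusOne-m_\minusOne))^2 \le \|v\|^2\,\|m_\plusOne-m_\minusOne\|^2=\|m_\plusOne-m_\minusOne\|^2,
$$
with equality if and only if $v$ is parallel to $m_\plusOne-m_\minusOne$, i.e.\ $v=\pm(m_\plusOne-m_\minusOne)/\|m_\plusOne-m_\minusOne\|$ (well defined since $m_\plusOne\neq m_\minusOne$). This yields \eqref{eq:proj}. There is no real obstacle in this argument --- the only thing worth emphasizing in the write-up is the equal-variance assumption $\Sigma_\pm=\alpha_\pm I$, which is exactly what makes the entropy terms drop out and turns the problem into maximizing a linear functional squared on the sphere.
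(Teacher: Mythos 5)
Your proposal is correct and follows essentially the same route as the paper's own proof: the constant-variance observation kills the entropy terms, the scalar-product formula \eqref{eq:scalar} gives the explicit Gaussian form of $\cip$, and the Cauchy--Schwarz inequality finishes the maximization of $(v^T(m_\plusOne-m_\minusOne))^2$ over $S$. Your write-up is if anything slightly more careful, since it makes the equality case of Cauchy--Schwarz explicit.
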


\begin{proof}
Since the covariances of $X_{\plusOne}$ and $X_{\minusOne}$
equal $\alpha_\pm I$, the values of $H_2(\nor(v^T m_{\plusOne},\alpha_\plusOne))$ and $H_2(\nor(v^T m_{\minusOne},\alpha_\minusOne))$ do not depend on $v \in S$. This means the maximization of $S \ni v \to D_{CS}(\nor(v^Tm_\plusOne,\alpha_\plusOne),\nor(v^Tm_\minusOne,\alpha_\minusOne))$
is equivalent to minimization of $\cip(\nor(v^Tm_\plusOne,\alpha_\plusOne),\nor(v^Tm_\minusOne,\alpha_\minusOne))$.

Consequently, we arrive at the problem of finding minimum of
$$
\begin{array}{l}
S \ni v \to \int \nor(v^Tm_\plusOne,\alpha_\plusOne) \nor(v^Tm_\minusOne,\alpha_\minusOne) \\[0.5ex]
=\frac{1}{\sqrt{2 \pi(\alpha_\plusOne+\alpha_\minusOne)}} \exp(-\frac{1}{2(\alpha_\plusOne+\alpha_\minusOne)}\|v^T(m_\plusOne-m_\minusOne)\|^2),
\end{array}
$$
which is equivalent to the search of maximum of 
$$
S \ni v \to \|v^T(m_\plusOne-m_\minusOne)\|^2.
$$
By the Cauchy-Schwarz inequality we trivially obtain that the above
function attains its maximum for
$$
v=\pm \frac{m_\plusOne-m_\minusOne}{\|m_\plusOne-m_\minusOne\|}
$$
\end{proof}

Let us interpret the Theorem
\ref{th:disc} from the discrimination point of view. 
If we know that data from each
class comes from the normal densities $\nor(m_{\pm},\alpha_\pm I)$, then
the optimal projection from the information point of view  
is onto the line spanned by $v$ given by \eqref{eq:proj}.

\section{Theory: largest margin classifiers}

The core idea behind the Support Vector Machine model is to construct a linear classifier which
maximizes the margin between the closest samples of opposite classes. In the simplest, linearly
separable case, these closest points are refered to as support vectors. It is easy to see, that 
if we fix the value of support vector projections on $v$ to $+1/-1$ then the margin 
$\margin$ can be expressed as $1/\left \| v \right \|$ which leads to the following optimization problem.

\medskip

\noindent{\textbf{Optimization problem: Largest margin linear classifier} }

\begin{equation*}
\begin{aligned}
& \underset{v,b}{\text{maximize}}
& &  \margin = \frac{1}{\left \| v \right \|}  \\
& \text{subject to}
& & y_i( v^Tx_i  - b ) \geq 1, \; i = 1, \ldots, N
\end{aligned}
\end{equation*}

\medskip

The above problem can be reformulated in terms of 
minimal distance $\dist( v^T X_\plusOne, v^T X_\minusOne)$
between classes projections on unit length vector $v$, where
$$
\dist(P_\plusOne,P_\minusOne):=\min\{ | p_\plusOne - p_\minusOne |: 
p_\plusOne \in P_\plusOne,p_\minusOne \in P_\minusOne\}. 
$$

\medskip

\noindent{\textbf{Reformulation: Largest margin linear classifier} }

\begin{equation*}
\begin{aligned}
& \underset{v,b}{\text{maximize}}
& &  \margin = \dist( v^T X_\plusOne, v^T X_\minusOne)  \\
& \text{subject to}
& & \sign( v^Tx_i - b ) = y_i, \; i = 1, \ldots, N \\
& & & \|v\| = 1
\end{aligned}
\end{equation*}

\medskip
One of the possible generalizations of this concept lies in building a multithreshold linear classier
and maximizing all the resulting thresholds (in particular -- to maximize the smallest of the thresholds). 
Figure~\ref{fig:mar} shows that such model can increase the size of the resulting margin even in case of the 
very simple dataset consisting of four points in $\R^2$.

\begin{figure}[hbt]
\begin{center}
 \includegraphics[width=0.45\textwidth]{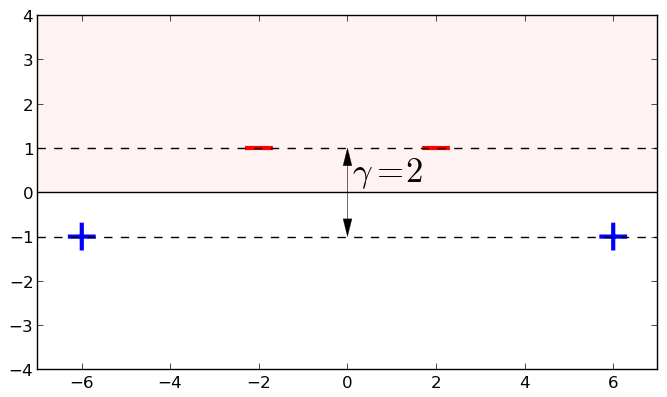}
 \includegraphics[width=0.45\textwidth]{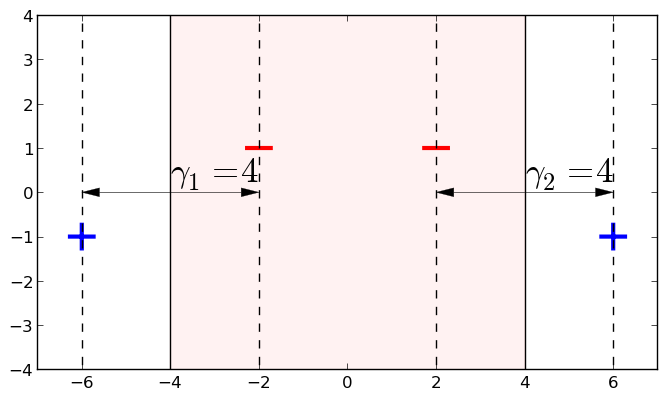}
 \caption{Example of the large margin multithreshold linear classifier (on the right) obtaining bigger margin than large margin linear classifier (on the left) on the simple dataset}
 \label{fig:mar}
\end{center}
\end{figure}
From the optimization perspective the only required modification is removal of the linear separation constraint. 

\medskip

\noindent{\textbf{Optimization problem: Largest margin multithreshold linear classifier} }

\begin{equation*}
\begin{aligned}
& \underset{v}{\text{maximize}}
& &  \margin = \dist( v^T X_\plusOne , v^T X_\minusOne )  \\
& \text{subject to}
& & \|v\| = 1   
\end{aligned}
\end{equation*}

\medskip

Such formulation can lead to arbitrary number of resulting thresholds, for example if we consider a dataset consistsing of points $x_i = i, N$, 
$y_i=(-1)^i,\; i = 1,\ldots,$.
the resulting optimal classifier would have $N-1$ thresholds of form $t_i = i + 0.5$ (for $v=1$). If we limit the number of resulting thresholds to $k$ 
(to remove the risk of overfitting) then we end up with $k-$level multithreshold linear classifier. 

\medskip

\noindent{\textbf{Optimization problem: Largest margin $k-$level multithreshold linear classifier} }

\begin{equation*}
\begin{aligned}
& \underset{v}{\text{maximize}}
& &  \margin = \dist( v^T X_\plusOne , v^T X_\minusOne )  \\
& \text{subject to}
 & & -\infty=t_0 < \ldots < t_i < \ldots < t_{k+1} = \infty \\
 & & & v^T X_\plusOne \subset \bigcup_{1 \leq i \leq k : 2|i} (t_i,t_{i+1}),\\
 & & & v^T X_\minusOne \subset \bigcup_{1 \leq i \leq k : 2|i} (t_{i-1},t_{i}), \\ 
& & & \|v\| = 1   
\end{aligned}
\end{equation*}


\medskip

It is easy to see that for $k=1$ the above problem reduces to the SVM problem which
can be solved in the polynomial time. However it appears that even in case of fixed $k=2$ the resulting decision problem is probably \mbox{NP-hard}~\cite{anthony2003learning}. In the following subsection we will introduce method of construction of such large margin
multithreshold classifier which will aim at the maximization of the margins while in the same time trying to reduce the amount of thresholds.

\subsection{Preliminaries}


A common method of density estimation is the kernel
density estimation~\cite{silverman1986density}. In the general case of data 
$P$ in 
$\R$, we typically choose a parameter $\sigma>0$ (often called
window width), and 
approximate the density of the underlying distribution by
$$
\de{P}_\sigma:=\frac{1}{|P|}\sum_{p \in P} \nor(p,\sigma^2).
$$

Although there are formulas for the optimal choice of $\sigma$ when
the data comes from the normal distribution, in general the optimal
choice of $\sigma$ is a nontrivial task which can hardly by automatized.
Intuitively, for large $\sigma$ the obtained density tends to become
one large Gaussian, while for sufficiently small we obtain almost atom 
measures at each element of $X$. In the first case we lose important information about the local properties of the data, while the second
typically leads to overfitting.

To present intuition we study the limiting cases. 
 At first we consider the limiting case
with $\sigma \to 0$, where we show that if in the linearly separable case we start from $v$ which linearly separates the data for the 
case of cross-information potential, we arrive at the 
largest margin problem (whose solution is given by SVM).
This motivates the procedure which we often apply in the next section
of starting the optimization from the SVM solution.
However, the minimization of 
cross-information potential potentially leads to
overfitting
where every point is memorized (although we still
maximize the possible margins). Next 
we formally show that for $\sigma\to \infty$ our data 
behaves (from the point of $D_{CS}$) as two large Gaussians --
we arrive at the formula which is an analogue of the results 
from the previous section. As a result we have a strong regularizing term which
prevents creation of too many thresholds\footnote{Creation of 
large number of thresholds often leads to overfitting}.
This supports the thesis that in the classification we should consider the whole
Cauchy-Schwarz divergence, as it contains the cross-information potential term, which aims to maximize the margin,
regularized by the sum of the Renyi's quadratic entropies of both classes.
This theoretical observation is supported in the next section by
empirical evaluation (see also Figure \ref{fig:ipx_dcs}), which shows that the single use of cross-information
potential in classification often leads to the unnecessary high number of thresholds,
which has consequences in suboptimal classification results.

\subsection{Margins maximization}

In this section we are going to show that minimization of the
cross information potential with small window size $\sigma \to 0$
leads to maximization of the margin width 
between the classes. Simultaneously, this will imply the
existence of  many local minima's.

We begin with the following proposition.

\begin{proposition} \label{pr:est}
Let $P_\plusOne,P_\minusOne \subset \R$ be given, $\sigma>0$. 
Then
\begin{eqnarray}
\cip(\de{P_\plusOne}_{\sigma},\de{P_\minusOne}_{\sigma}) \leq 
\frac{1}{\sqrt{2\pi}\sigma}\exp(-\tfrac{\dist^2(P_\plusOne,P_\minusOne)}{2\sigma^2}), \\
\cip(\de{P_\plusOne}_{\sigma},\de{P_\minusOne}_{\sigma}) \geq 
\frac{1}{\sqrt{2\pi}\sigma |P_\plusOne| \cdot |P_\minusOne|}
\exp(-\tfrac{\dist^2(P_\plusOne,P_\minusOne)}{2\sigma^2}).
\end{eqnarray}
\end{proposition}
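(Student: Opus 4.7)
The plan is to unfold the cross-information potential into a finite sum, evaluate each inner integral via the scalar-product identity \eqref{eq:scalar}, and then sandwich the resulting sum of Gaussian terms by two quantities expressed entirely in terms of the minimal interclass separation $d := \dist(P_\plusOne, P_\minusOne)$.

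By linearity and the definition of $\de{\cdot}_\sigma$,
\begin{equation*}
\cip(\de{P_\plusOne}_\sigma, \de{P_\minusOne}_\sigma) = \frac{1}{|P_\plusOne|\,|P_\minusOne|} \sum_{p_\plusOne \in P_\plusOne}\sum_{p_\minusOne \in P_\minusOne} \int \nor(p_\plusOne,\sigma^2)\,\nor(p_\minusOne,\sigma^2).
\end{equation*}
Applying \eqref{eq:scalar} in one dimension with variances $\sigma^2$ on each side, every inner integral collapses to $\nor(p_\plusOne - p_\minusOne,\, 2\sigma^2)(0)$, which is a strictly decreasing function of the squared gap $(p_\plusOne - p_\minusOne)^2$. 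At this point the statement reduces to a purely elementary estimate of a finite sum of such exponential terms.

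For the upper bound I would exploit that every pair satisfies $(p_\plusOne - p_\minusOne)^2 \geq d^2$, so each of the $|P_\plusOne|\cdot|P_\minusOne|$ summands is dominated by the common value obtained when the squared gap is replaced by $d^2$; the prefactor $1/(|P_\plusOne|\,|P_\minusOne|)$ then exactly cancels the number of summands, producing the first inequality. For the lower bound I would single out a pair $(p_\plusOne^\ast, p_\minusOne^\ast)$ that realises the minimum, i.e.\ $|p_\plusOne^\ast - p_\minusOne^\ast| = d$, and discard all the remaining (nonnegative) summands; here the prefactor is \emph{not} cancelled, which is precisely why the factor $|P_\plusOne|\cdot|P_\minusOne|$ appears in the denominator of the second inequality.

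There is no genuine obstacle beyond careful bookkeeping of the one-dimensional normalisation produced by \eqref{eq:scalar}; the substantive content of the proposition is the resulting sandwich, both sides of which are of the shape $(C/\sigma)\exp(-d^2/(c\sigma^2))$ for positive constants $C,c$. This is exactly the ingredient needed in the sequel: as $\sigma \to 0$ the tight coupling between $\cip$ and $\exp(-d^2/\sigma^2)$ turns minimisation of $\cip$ into maximisation of the margin $d$, recovering the large-margin philosophy highlighted earlier in the section.
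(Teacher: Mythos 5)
Your proposal follows essentially the same route as the paper's proof: expand $\cip$ into the double sum of pairwise Gaussian overlaps, bound each summand from above by its value at the minimal gap $\dist(P_\plusOne,P_\minusOne)$ (the prefactor cancelling the number of terms), and bound from below by retaining only the minimizing pair and discarding the rest. If anything, your explicit appeal to \eqref{eq:scalar} is the more careful step: the overlap of two $\sigma^2$-kernels is $\nor(p_\plusOne-p_\minusOne,2\sigma^2)(0)=\tfrac{1}{2\sqrt{\pi}\sigma}\exp\left(-\tfrac{(p_\plusOne-p_\minusOne)^2}{4\sigma^2}\right)$, so the constants this argument actually produces differ from those displayed in the proposition (and in the paper's own proof) by the usual factor of two in the variance --- a harmless discrepancy for the asymptotic use later made of the bounds, which you correctly summarise as a sandwich of the form $(C/\sigma)\exp(-d^2/(c\sigma^2))$.
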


\begin{proof}
Let us choose $\bar p_\plusOne \in P_\plusOne$ and $\bar p_\minusOne \in P_\minusOne$ such that
$
|\bar p_\plusOne- \bar p_\minusOne|=D=\dist(P_\plusOne,P_\minusOne)
$, then
$$
\int \tfrac{1}{|P_\plusOne|} \nor(p_\plusOne,\sigma^2)
\cdot \tfrac{1}{|P_\minusOne|} \nor(p_\minusOne,\sigma^2)
=\tfrac{1}{|P_\plusOne| \cdot |P_\minusOne|} \frac{\exp(-D^2/2\sigma^2)}{\sqrt{2\pi}\sigma}.
$$
On the other hand
$$
\int \sum_{p_\plusOne \in P_\plusOne} \tfrac{1}{|P_\plusOne|} \nor(\bar p_\plusOne,\sigma^2)
\cdot \sum_{p_\minusOne \in P_\minusOne} \tfrac{1}{|P_\minusOne|} \nor(\bar p_\minusOne,\sigma^2)
$$
$$
\leq \sum_{p_\plusOne \in P_\plusOne,p_\minusOne \in P_\minusOne}
\frac{1}{|P_\plusOne| \cdot |P_\minusOne|} \frac{\exp(-D^2/2\sigma^2)}{\sqrt{2\pi}\sigma} =\frac{\exp(-D^2/2\sigma^2)}{\sqrt{2\pi}\sigma}.
$$
\end{proof}

Given $v \neq 0$ and $\sigma>0$ we
put 
$$
\cip_\sigma(v):=\cip(\de{v^T X_\plusOne}_\sigma,\de{v^T X_\minusOne}_\sigma).
$$

We say that $v \in S$ {\em linearly separates} $X_\minusOne$ from 
$X_\plusOne$ if
$$
\inf(v^TX_\plusOne) \geq \sup(v^TX_\minusOne).
$$

First, we show that if we start the gradient descent method of
$\cip_\sigma(\cdot)$ with sufficiently small $\sigma>0$ from the vector which linearly separates
the classes, we will remain in the set of vectors which discriminate the classes.

\begin{theorem} \label{th:mar1}
We assume that $v \in S$ linearly separates $X_\minusOne$ from $X_\minusOne$ 
and that $\sigma>0$ is such that
$$
\sigma<\frac{\dist(v^TX_\plusOne,v^TX_\minusOne)}{\sqrt{2\log( |X_+| \cdot |X_-|)}}.
$$
Then steepest descent for minimization of $\cip_\sigma(\cdot)$ leads to the choice of $v'$ which also linearly separates $X_\minusOne$ from $X_\minusOne$.
\end{theorem}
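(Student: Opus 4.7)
The plan is to show that the steepest descent trajectory for $\cip_\sigma$ cannot leave the set $U \subseteq S$ of linearly separating unit vectors, because $\cip_\sigma$ attains strictly larger values on the boundary $\partial U$ than at the starting point $v$, while descent is by definition non-increasing. The whole argument will reduce to comparing an upper bound for $\cip_\sigma(v)$ with a lower bound for $\cip_\sigma$ on $\partial U$, both supplied directly by Proposition \ref{pr:est}.

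First I would apply the upper bound in Proposition \ref{pr:est}: since $v \in U$ we have $D := \dist(v^T X_+, v^T X_-) > 0$, which immediately yields
\[
\cip_\sigma(v) \;\leq\; \frac{1}{\sqrt{2\pi}\,\sigma}\exp\!\left(-\frac{D^2}{2\sigma^2}\right).
\]
Next I would pick any boundary vector $\tilde v \in \partial U$; by definition there must exist $p_+ \in X_+$ and $p_- \in X_-$ with $\tilde v^T p_+ = \tilde v^T p_-$, hence $\dist(\tilde v^T X_+, \tilde v^T X_-) = 0$, and the lower bound in Proposition \ref{pr:est} gives
\[
\cip_\sigma(\tilde v) \;\geq\; \frac{1}{\sqrt{2\pi}\,\sigma\,|X_+|\cdot|X_-|}.
\]
I would then note that the assumed condition $\sigma < D/\sqrt{2\log(|X_+|\cdot|X_-|)}$ is exactly equivalent to $\exp(-D^2/(2\sigma^2)) < 1/(|X_+|\cdot|X_-|)$, so the two displays combine into the strict inequality $\cip_\sigma(v) < \cip_\sigma(\tilde v)$ uniformly over $\tilde v \in \partial U$.

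The theorem would then follow from a continuity argument. The steepest descent path $t \mapsto v(t)$ is continuous on $S$ and $\cip_\sigma(v(t))$ is non-increasing along it; if the path ever left $U$, by continuity it would have to pass through some $v(t^*) \in \partial U$ with $\cip_\sigma(v(t^*)) \leq \cip_\sigma(v(0)) = \cip_\sigma(v)$, contradicting the strict inequality just established. Hence every iterate $v' = v(t)$ remains in $U$, which is the assertion.

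The main obstacle, to my mind, is pinning down precisely what ``steepest descent'' means: the argument above is really about continuous gradient flow on $S$, and transferring it to discrete gradient updates needs either infinitesimal step sizes or a separate small-step continuity argument. A minor side issue is checking that the topological frontier of the open separating region within $S$ really equals $\{\tilde v \in S : \inf \tilde v^T X_+ = \sup \tilde v^T X_-\}$, which follows from continuity of $w \mapsto \inf w^T X_+$ and $w \mapsto \sup w^T X_-$ for finite $X_\pm$; continuity of $\cip_\sigma$ itself, used implicitly throughout, is immediate from the Gaussian form of the underlying kernels.
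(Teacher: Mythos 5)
Your proposal is correct and is essentially the same argument as the paper's: both use the upper bound from Proposition \ref{pr:est} at the starting vector, the lower bound at any vector where two projections coincide, observe that the hypothesis on $\sigma$ makes the former strictly smaller than the latter, and conclude by continuity that a non-increasing descent path cannot cross the set where separation fails. The paper likewise sidesteps the discrete-versus-continuous issue you flag, by simply taking the descent trajectory to be a continuous curve along which $\cip_\sigma$ does not increase.
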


\begin{proof}
Let $\V:[0,\bar t] \to S$, $\V(0)=v, \V(\bar t)=v'$ be an arbitrary continuous curve (in particular given by the seepest descent method) along which the value of $t  \to \cip_\sigma(\V(t))$ does not increase. 

Suppose that the assertion does not hold. This means that there exists
$x_+ \in X_+$, $x_- \in X_-$ and $t \in [0,\bar t]$ such that
$$
\V(t)^T x_+ \leq \V(t)^T x_-.
$$
By  the continuity we conclude that there exists $t_0 \leq t$ such that
$$
\V(t_0)^T x_+ = \V(t_0)^Tx_-.
$$
This means that $\dist(\V(t_0)^TX_+,\V(t_0)^TX_-)=0$,
and consequently by the previous proposition we get
$$
\cip_\sigma(\V(t_0)) \geq \frac{1}{\sqrt{2\pi}\sigma |X_+| \cdot |X_-|}.
$$
But from the assumptions we know that the cross-information potential
does not increase with $t$, which means that
$$
\cip_\sigma(\V(t_0)) \leq \cip_\sigma(\V(0)) \leq \frac{\exp(-\dist^2(v^TX_+, v^TX_-)/(2\sigma^2))}{\sqrt{2\pi}\sigma}.
$$
Joining this with the previous inequality, after obvious 
calculations, we obtain
$$
\dist(v^TX_+,v^TX_-) \leq \sqrt{2}\sigma \log^{1/2}( |X_+| \cdot |X_-|),
$$
a contradiction.
\end{proof}


Suppose that $X_\minusOne$ is linearly separable from $X_\plusOne$.
Let
$$
\begin{array}{l}
S_{LS}(X_\plusOne,X_\minusOne):=\\
\{v \in S: v \text{ linearly separates $X_\minusOne$ from $X_\plusOne$}\}.
\end{array}
$$
Let $\margin_{SVM}(X_\minusOne,X\plusOne)$ 
denote the maximal possible margin
along $v$ which linearly separates $X_\plusOne$ from $X_\minusOne$:
$$
\begin{array}{l}
\margin_{SVM}(X_\minusOne,X_\plusOne):= \\
\sup\{\dist(v^TX_\plusOne,v^TX_\minusOne)\,:\,
v \in S_{LS}(X_{\plusOne},X_{\minusOne})\}. 
\end{array}
$$
By $v_\svm$ we denote the unique element which realizes the above
maximum. Clearly, it is given by the normalized solution to the SVM process.

Now we are going to show that in the limiting case $\sigma \to 0$, we converge to the solution of the SVM procedure -- in other words we aim
at the maximization of the linear margin between classes.

\begin{theorem} \label{th:mar2}
Consider linearly separable classes $X_\minusOne$ and $X_\plusOne$.
Let $\bar v \in S_{LS}(X_\plusOne,X_\minusOne)$ denote an arbitrary point which realizes the minimum of the cross-information potential:
$$
\cip_\sigma(\bar v)=\min \{\cip_\sigma(v):v \in S_{LS}(X_\plusOne,X_\minusOne)\}.
$$
Then the resulting linear classifier's margin 
$\dist(\bar v^TX_+,\bar v^TX_-)$ is at least as big as
$$
\margin_{SVM}(X_\minusOne,X_\plusOne)-
\sigma \sqrt{2\log(|X_\plusOne| \cdot |X_\minusOne|)}.
$$
\end{theorem}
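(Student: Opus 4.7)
The plan is to sandwich $\cip_\sigma(\bar v)$ and $\cip_\sigma(v_\svm)$ between the bounds from Proposition~\ref{pr:est} and compare. Since $v_\svm \in S_{LS}(X_+,X_-)$, the minimality of $\bar v$ on this set yields
$\cip_\sigma(\bar v) \leq \cip_\sigma(v_\svm)$.

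Next I would apply the upper bound of Proposition~\ref{pr:est} to $\cip_\sigma(v_\svm)$ (using $\dist(v_\svm^T X_+, v_\svm^T X_-) = \margin_{SVM}(X_-,X_+)$), and the lower bound of the same proposition to $\cip_\sigma(\bar v)$ (using $D_{\bar v} := \dist(\bar v^T X_+, \bar v^T X_-)$). Chaining these with the inequality above and cancelling the common factor $\frac{1}{\sqrt{2\pi}\sigma}$ gives
$$
\frac{1}{|X_+|\cdot|X_-|}\exp\!\left(-\frac{D_{\bar v}^2}{2\sigma^2}\right) \leq \exp\!\left(-\frac{\margin_{SVM}^2(X_-,X_+)}{2\sigma^2}\right).
$$
Taking logarithms and rearranging then produces
$$
\margin_{SVM}^2(X_-,X_+) - D_{\bar v}^2 \leq 2\sigma^2 \log(|X_+|\cdot|X_-|).
$$

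To convert this quadratic bound into the claimed linear one, I would use the elementary observation that $\sqrt{a^2 - b^2} \geq a - b$ whenever $a \geq b \geq 0$ (which follows from $(a-b)^2 \leq a^2 - b^2 \Leftrightarrow b \leq a$). Setting $a = \margin_{SVM}(X_-,X_+)$ and $b = \sigma \sqrt{2\log(|X_+|\cdot|X_-|)}$, if $a \geq b$ the bound above gives
$$
D_{\bar v} \geq \sqrt{a^2 - b^2} \geq a - b = \margin_{SVM}(X_-,X_+) - \sigma\sqrt{2\log(|X_+|\cdot|X_-|)},
$$
which is the asserted estimate. If instead $a < b$, the right-hand side of the theorem is negative, so the bound $D_{\bar v} \geq 0$ already suffices.

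No step looks like a real obstacle; the only mild subtlety is going from the exponential/quadratic form to the linear form via $\sqrt{a^2-b^2}\geq a-b$, together with separating the trivial regime where the promised lower bound is non-positive. All of the heavy lifting is done by Proposition~\ref{pr:est} together with the defining minimality property of $\bar v$ relative to the SVM direction.
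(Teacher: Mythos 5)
Your proposal is correct and follows essentially the same route as the paper's proof: bound $\cip_\sigma(v_\svm)$ from above via Proposition~\ref{pr:est}, use minimality of $\bar v$ over $S_{LS}$ to get $\cip_\sigma(\bar v)\leq \cip_\sigma(v_\svm)$, bound $\cip_\sigma(\bar v)$ from below, and chain. The paper leaves the final passage from the quadratic inequality to the linear margin bound as ``obvious''; your explicit use of $\sqrt{a^2-b^2}\geq a-b$ together with the trivial case $a<b$ fills in exactly that step.
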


\begin{proof}
Let us first estimate the value of $\cip_\sigma(v_\svm)$ by applying
Proposition \ref{pr:est} with $P_{\pm}=\bar v^T X_{\pm}$
$$
\cip_\sigma(v_\svm) \leq \frac{1}{\sqrt{2\pi}\sigma}
\exp(-\tfrac{\margin(X_\minusOne,X_\plusOne)^2}{2\sigma^2}).
$$
So let us now choose an arbitrary $\bar v$ which separates $X_\plusOne$
from $X_\minusOne$ which realizes the minimum of the cross-information
potential. Then
$$
\frac{1}{\sqrt{2\pi}\sigma} \exp(-\tfrac{\margin(X_\minusOne,X_\plusOne)^2}{2\sigma^2}) \geq \cip_\sigma(v_\svm) \geq \cip_\sigma(v)
$$
$$
\geq \frac{1}{\sqrt{2\pi}\sigma |X_\plusOne|   |X_\minusOne|}
\exp(-\tfrac{\margin(v;X_\plusOne,X_\minusOne)^2}{2\sigma^2}),
$$
which directly yields the assertion of the theorem.
\end{proof}

Now we discuss the minimization of
cross-information potential over the whole $S$. It occurs that in the limiting
case $\sigma \to 0$ it results in the margin maximization.
Let $\margin(X_\minusOne,X\plusOne)$ denote the maximal possible margin along $v \in S$ 
$$
\margin(X_\minusOne,X_\plusOne):=
\sup\{\dist(v^TX_\plusOne,v^TX_\minusOne)\,:\,
v \in S\}. 
$$
By applying similar reasoning as in the proof of Theorem \ref{th:mar2} we obtain
that the minimization of cross-information potential leads
to the maximization of multiple margins in the multithreshold classifier.

\begin{theorem}
Consider classes $X_\minusOne$ and $X_\plusOne$.
Let $\bar v \in S$ denote an arbitrary point which realizes the minimum of cross-information potential:
$$
\cip_\sigma(\bar v)=\min \{\cip_\sigma(v):v \in S\}.
$$
Then the resulting multithreshold linear classifier's margins 
$\dist(\bar v^TX_+,\bar v^TX_-)$ are at least as big as 
$$
\margin(X_\minusOne,X_\plusOne)-
\sigma \sqrt{2\log(|X_\plusOne| \cdot |X_\minusOne|)}.
$$
\end{theorem}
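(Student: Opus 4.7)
The plan is to run a three-step sandwich argument in direct analogy with the proof of Theorem \ref{th:mar2}, the only change being that the comparison vector is allowed to range over the entire sphere $S$ rather than just the linearly separating cone. Crucially, Proposition \ref{pr:est} imposes no separability hypothesis, so its upper and lower bounds on $\cip_\sigma$ in terms of the minimum projected interpoint distance remain available in this more general setting.

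First I would pick $v_M \in S$ realizing the multithreshold margin, so that $\dist(v_M^T X_\plusOne, v_M^T X_\minusOne) = \margin(X_\minusOne, X_\plusOne)$; existence follows from continuity of $v \mapsto \dist(v^T X_\plusOne, v^T X_\minusOne)$ on the compact sphere $S$. Applying the upper bound of Proposition \ref{pr:est} with $P_\pm = v_M^T X_\pm$ gives
$$
\cip_\sigma(v_M) \leq \frac{1}{\sqrt{2\pi}\sigma}\exp\!\left(-\tfrac{\margin(X_\minusOne,X_\plusOne)^2}{2\sigma^2}\right),
$$
and since $\bar v$ is a global minimizer of $\cip_\sigma$ on $S$, the same upper bound transfers to $\cip_\sigma(\bar v)$.

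Second, I would invoke the lower bound of Proposition \ref{pr:est} at $\bar v$, yielding
$$
\cip_\sigma(\bar v) \geq \frac{1}{\sqrt{2\pi}\sigma |X_\plusOne|\cdot|X_\minusOne|}\exp\!\left(-\tfrac{\dist(\bar v^T X_\plusOne, \bar v^T X_\minusOne)^2}{2\sigma^2}\right).
$$
Chaining the two inequalities, cancelling the common prefactor $1/(\sqrt{2\pi}\sigma)$, and taking logarithms produces the clean estimate
$$
\margin(X_\minusOne,X_\plusOne)^2 - \dist(\bar v^T X_\plusOne, \bar v^T X_\minusOne)^2 \leq 2\sigma^2 \log(|X_\plusOne|\cdot|X_\minusOne|).
$$
Finally, I would close with the elementary observation that for $a,b,c \geq 0$ one has $a^2 \leq b^2 + c^2 \leq (b+c)^2$, hence $b \geq a-c$. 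Setting $a = \margin(X_\minusOne,X_\plusOne)$, $b = \dist(\bar v^T X_\plusOne, \bar v^T X_\minusOne)$, and $c = \sigma\sqrt{2\log(|X_\plusOne|\cdot|X_\minusOne|)}$ delivers the claimed bound.

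I expect no genuine obstacle: the argument is essentially a verbatim transcription of Theorem \ref{th:mar2} once one observes that neither Proposition \ref{pr:est} nor the chaining of its bounds relies on linear separability. The only place where the earlier proof used the SVM structure was in identifying the optimal linearly-separating direction $v_\svm$; here we instead identify $v_M$, which is merely a maximiser of the projected interclass distance without any sign constraint. Everything else is a matter of bookkeeping in exponentials.
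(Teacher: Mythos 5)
Your proposal is correct and is exactly the argument the paper intends: the paper omits the proof of this theorem, stating only that it follows ``by applying similar reasoning as in the proof of Theorem \ref{th:mar2},'' and your three-step sandwich (upper bound at a margin-maximizing $v_M$, transfer to the minimizer $\bar v$, lower bound at $\bar v$, then the elementary $a^2 \leq b^2 + c^2 \Rightarrow b \geq a-c$ step) is precisely that reasoning with $v_\svm$ replaced by $v_M$ and the separating cone replaced by the whole sphere. Your observation that Proposition \ref{pr:est} never uses linear separability is the correct justification for why the transcription goes through.
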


The above theorem leads
to the conclusion that cross-information potential (without
additional regularizing terms) leads to the construction of largest margin
multithreshold classifier. However, it lacks the ability to control 
the number of resulting thresholds and as a result, for sufficiently
small $\sigma$, it may construct an interval for each data point, which
leads to overfitting. A real life example of sonar dataset from UCI
repository is given in Figure~\ref{fig:ipx_dcs}, which shows the
comparison of minimization of $\cip$ versus maximization of
Cauchy-Schwarz divergence. \begin{figure}[hbt]
\begin{center}
 \includegraphics[width=0.45\textwidth]{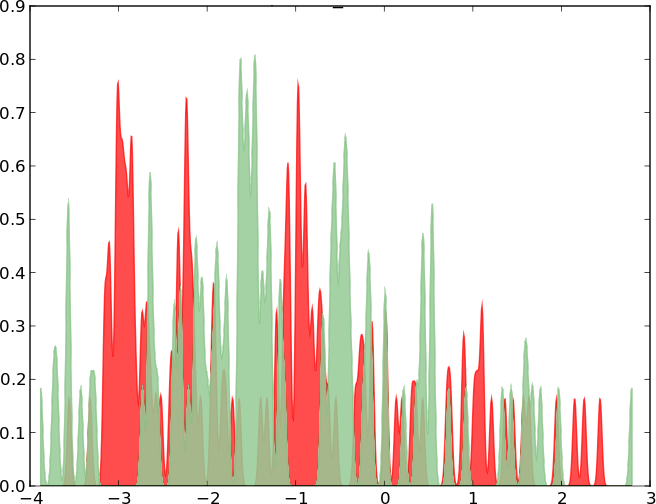}
 \includegraphics[width=0.45\textwidth]{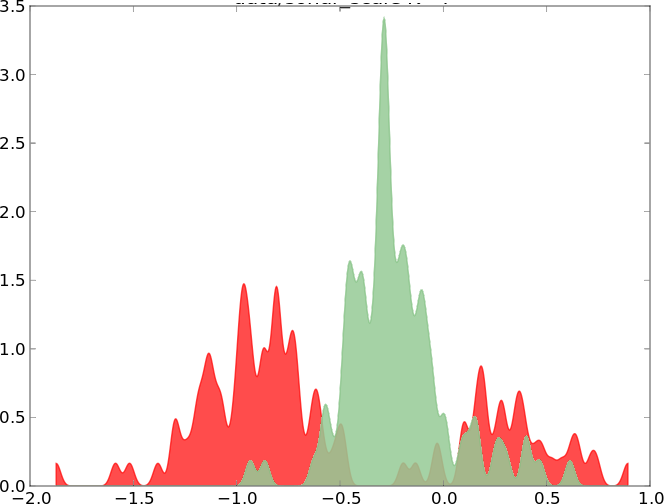}
\end{center}
 \caption{Sample kernel density estimation of projected sonar dataset with small $\sigma$ using $\cip$ optimization (on the left) and $D_{CS}$ (on the right).}
 \label{fig:ipx_dcs}
\end{figure}Following section shows how introduction
of the classes' entropies to the optimization process causes reduction
of the model's complexity (in a limiting case to the linear classifier). 

\subsection{Regularization}

We show that the analogue of the Theorem \ref{th:disc} holds also for the limiting case when we increase the window width to infinity. This will result in construction of the linear classifier (limiting reduction of the number of thresholds).

We consider only the case when the
window width $\sigma$ is set to be equal for both classes.
We recall that for $P \subset \R$ and $\sigma>0$ we put 
$$
\de{P}_\sigma:=\frac{1}{|P|}\sum_{p \in P} \nor(p,\sigma^2).
$$
Thus $\de{P}_{\sigma}$ denotes the kernel density estimation based on the
set $P$ with window width $\sigma$.
We begin with the following observation.

\begin{proposition} \label{pr:mean}
Let $P_\plusOne,P_\minusOne \subset \R$ be given, and let the window width $\sigma>0$ be fixed. Then
\begin{equation} \label{eq:mean}
D_{CS}(\de{P_\plusOne}_{\sigma},\de{P_\minusOne}_{\sigma})= \frac{1}{2\sigma^2}(m_\plusOne-m_\minusOne)^2
+\mathcal{O}(\sigma^{-4}) \text{ as } \sigma \to \infty,
\end{equation}
where $m_{\pm}$ denote the means of $P_{\pm}$.
\end{proposition}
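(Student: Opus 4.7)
The strategy is a large-$\sigma$ expansion of each of the three integrals appearing in $D_{CS}$, followed by a careful bookkeeping of the terms of order $\sigma^{-2}$.

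First I would use the Gaussian product formula \eqref{eq:scalar} to write, for any $p,q \in \R$,
$$
\int \nor(p,\sigma^2)\,\nor(q,\sigma^2) = \nor(0,2\sigma^2)(p-q) = \frac{1}{\sqrt{4\pi\sigma^2}}\exp\!\Bigl(-\tfrac{(p-q)^2}{4\sigma^2}\Bigr).
$$
Expanding the $\exp$ to second order gives, for bounded $(p-q)^2$ (which holds since $P_\plusOne \cup P_\minusOne$ is finite),
$$
\int \nor(p,\sigma^2)\nor(q,\sigma^2) = \frac{1}{\sqrt{4\pi\sigma^2}}\Bigl(1 - \frac{(p-q)^2}{4\sigma^2} + \mathcal{O}(\sigma^{-4})\Bigr),
$$
uniformly in $p,q \in P_\plusOne \cup P_\minusOne$.

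Summing this identity over all pairs, I get the closed forms (with $v_\pm$ denoting the empirical variance of $P_\pm$):
$$
\int \de{P_\plusOne}_\sigma^2 = \frac{1}{\sqrt{4\pi\sigma^2}}\Bigl(1 - \frac{v_\plusOne}{2\sigma^2} + \mathcal{O}(\sigma^{-4})\Bigr),
$$
$$
\int \de{P_\minusOne}_\sigma^2 = \frac{1}{\sqrt{4\pi\sigma^2}}\Bigl(1 - \frac{v_\minusOne}{2\sigma^2} + \mathcal{O}(\sigma^{-4})\Bigr),
$$
$$
\int \de{P_\plusOne}_\sigma\de{P_\minusOne}_\sigma = \frac{1}{\sqrt{4\pi\sigma^2}}\Bigl(1 - \frac{v_\plusOne + v_\minusOne + (m_\plusOne - m_\minusOne)^2}{4\sigma^2} + \mathcal{O}(\sigma^{-4})\Bigr),
$$
where I use the elementary identities $\tfrac{1}{|P|^2}\sum_{p_1,p_2}(p_1-p_2)^2 = 2v_P$ and $\tfrac{1}{|P_\plusOne||P_\minusOne|}\sum (p_\plusOne - p_\minusOne)^2 = v_\plusOne + v_\minusOne + (m_\plusOne - m_\minusOne)^2$.

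Next I would apply $\log(1+x) = x + \mathcal{O}(x^2)$, so that $\log(1 + c\sigma^{-2} + \mathcal{O}(\sigma^{-4})) = c\sigma^{-2} + \mathcal{O}(\sigma^{-4})$, to each of the three integrals. Substituting into \eqref{eq:DCS}, the three $-\tfrac{1}{2}\log(4\pi\sigma^2)$ contributions assemble with signs $+1,+1,-2$ (from the two self-terms and the double cross-term) and cancel exactly, and the two $v_\plusOne$ and $v_\minusOne$ contributions from the self-terms cancel against those from the cross-term. What survives is
$$
D_{CS}(\de{P_\plusOne}_\sigma,\de{P_\minusOne}_\sigma) = \frac{(m_\plusOne - m_\minusOne)^2}{2\sigma^2} + \mathcal{O}(\sigma^{-4}),
$$
which is the claim.

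The only delicate point is making sure the $\mathcal{O}(\sigma^{-4})$ constants are honest: since $P_\pm$ are finite, the squared differences $(p-q)^2$ are uniformly bounded, so the Taylor remainders in the $\exp$ expansion are uniformly $\mathcal{O}(\sigma^{-4})$, and the subsequent $\log(1+\cdot)$ expansion preserves this order. No nontrivial analysis is required beyond this uniform control.
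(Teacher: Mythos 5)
Your proposal is correct and follows essentially the same route as the paper's proof: the Gaussian product formula, a second-order Taylor expansion of $\exp$ and $\log$, and the cancellation of the leading $\log(4\pi\sigma^2)$ and within-class spread terms. Your only departure is cosmetic --- you package the pair sums as empirical variances up front, whereas the paper carries the raw double sums through and performs the same combinatorial cancellation at the end.
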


\begin{proof}
We denote elements of $P_\plusOne$ by $p_\plusOne$,
and elements of $P_\minusOne$ by $p_\minusOne$.

We have
$$
\begin{array}{l}
\int \de{{P_\plusOne}}_\sigma \de{{P_\minusOne}}_\sigma=\frac{1}{|{P_\plusOne}||{P_\minusOne}|}\sum_{p_\plusOne,p_\minusOne} \nor(p_\plusOne-p_\minusOne,2\sigma^2)\\[0.5ex]
=
\frac{1}{\sqrt{4\pi \sigma^2}|{P_\plusOne}||{P_\minusOne}|}\sum_{p_\plusOne,p_\minusOne} \exp(-(p_\plusOne-p_\minusOne)^2/(4\sigma^2)).
\end{array}
$$
Since $\exp(h)= 1+h+\mathcal{O}(h^2)$ and $\log(1+h)=1+h+\mathcal{O}(h^2)$ for small $h$,  the above equality implies that for large $\sigma$
$$
\begin{array}{l}
\log \int \de{{P_\plusOne}}_\sigma \de{{P_\minusOne}}_\sigma \\[0.5ex]
= -\log(2\sigma\sqrt{\pi})\\
\phantom{=}+\log(1-\frac{1}{4|{P_\plusOne}||{P_\minusOne}|\sigma^2}\sum_{p_\plusOne,p_\minusOne} (p_\plusOne-p_\minusOne)^2)+O(\sigma^{-4}) \\[0.5ex]
= -\log(2\sigma\sqrt{\pi})-\frac{1}{4|{P_\plusOne}||{P_\minusOne}|\sigma^2}\sum_{p_\plusOne,p_\minusOne} (p_\plusOne-p_\minusOne)^2
+\mathcal{O}(\sigma^{-4}).
\end{array}
$$
Consequently
$$
\begin{array}{l}
D_{CS}(\de{{P_\plusOne}}_\sigma,\de{{P_\minusOne}}_\sigma) \\[0.5ex]
=\log \int \de{{P_\plusOne}}_\sigma^2+\log \int \de{{P_\minusOne}}_\sigma^2-2\log \int \de{{P_\plusOne}}_\sigma \de{{P_\minusOne}}_\sigma
\\[0.5ex]
=\frac{1}{4\sigma^2}(-\tfrac{1}{|{P_\plusOne}|^2}\sum_{p_\plusOne,p'_\minusOne} (p_\plusOne-p'_{\plusOne})^2-\tfrac{1}{|{P_\minusOne}|^2}\sum_{p_\minusOne,p'_\minusOne} (p_\minusOne-p'_{\minusOne})^2\\
\phantom{=}+\tfrac{2}{|{P_\plusOne}||{P_\minusOne}|}\sum_{p_\plusOne,p_\minusOne} (p_\plusOne-p_\minusOne)^2)+\mathcal{O}(\sigma^{-4}).
\end{array}
$$
By applying obvious calculations we obtain that
$$
\begin{array}{l}
-\frac{1}{|{P_\plusOne}|^2}\sum_{p_\plusOne,p'_\plusOne} (p_\plusOne-p'_{\plusOne})^2-\frac{1}{|{P_\minusOne}|^2}\sum_{p_\minusOne,p'_\minusOne} (p_\minusOne-p'_{\minusOne})^2 \\
\phantom{\frac{1}{4\sigma^2}[}+\frac{2}{|{P_\plusOne}||{P_\minusOne}|}\sum_{p_\plusOne,p_\minusOne} (p_\plusOne-p_\minusOne)^2
\\[0.5ex]
=-2 (\frac{1}{|{P_\plusOne}|}\sum_{p_\plusOne} p_\plusOne^2-
(\frac{1}{|{P_\plusOne}|}\sum_{p_\plusOne} p_\plusOne)^2)
\\
\phantom{=}-2(\frac{1}{|{P_\minusOne}|}\sum_{p_\minusOne} p_\minusOne^2-
(\frac{1}{|{P_\minusOne}|}\sum_{p_\minusOne} p_\minusOne)^2) \\
\phantom{=}+\frac{2}{|{P_\plusOne}|}\sum_{p_\plusOne} p_\plusOne^2+\frac{2}{|{P_\minusOne}|}\sum_{p_\minusOne} p_\minusOne^2
-\frac{4}{|{P_\plusOne}||{P_\minusOne}|}\sum_{p_\plusOne} p_\plusOne \sum_{p_\minusOne} p_\minusOne
\\[0.5ex]
=-4m_{\plusOne}m_{\minusOne}+2m_{\plusOne}^2+
2m_{\minusOne}^2=2(m_{\plusOne}-m_{\minusOne})^2.
\end{array}
$$
\end{proof}

Observe that the constant $C$ in $\mathcal{O}(\sigma^{-4})=C\sigma^{-4}$  in the \eqref{eq:mean} can be
estimated from the proof by an increasing function of 
$\sum_{p_\plusOne} | p_\plusOne |^2+\sum_{p_\minusOne} |p_\minusOne|^2$.

\begin{theorem}
We consider classes $X_\plusOne$ and $X_\minusOne$. We assume additionally that class centers $m_\pm \in \R^d$ are such that $m_\plusOne  \neq m_\minusOne$, where $m_\pm$ denote the means of $X_\pm$. We put 
$$
v_\infty=\frac{m_\plusOne-m_\minusOne}{\|m_\plusOne-m_\minusOne\|}.
$$

For $\sigma>0$ let $v_\sigma \in S$ denote the argument for which the function
$$
D_{CS}^\sigma(v):S \ni v \to D_{CS}(\de{v^TX_{\plusOne}}_\sigma,\de{v^TX_{\minusOne}}_\sigma)
$$
takes the maximum value.
Then $v_\sigma$ tends to $\pm v_\infty$ with increasing $\sigma$, that is
$$
\min(\|v_{\sigma}-v_\infty\|,\|v_\sigma+v_\infty\|)=\mathcal{O}(\sigma^{-1}) 
\text{ as } \sigma \to \infty.
$$
\end{theorem}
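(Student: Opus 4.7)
The plan is to reduce the problem to a routine perturbation argument based on Proposition \ref{pr:mean}. Write $d=m_\plusOne-m_\minusOne$ and note that the projected classes $v^T X_\plusOne$, $v^T X_\minusOne$ have means $v^T m_\plusOne$ and $v^T m_\minusOne$. Applying Proposition \ref{pr:mean} with $P_\pm=v^TX_\pm$ therefore gives, for $v\in S$,
$$
D_{CS}^\sigma(v)=\tfrac{1}{2\sigma^2}(v^T d)^2+\mathcal{O}(\sigma^{-4})\text{ as }\sigma\to\infty.
$$
The remark following Proposition \ref{pr:mean} says that the constant hidden in $\mathcal{O}(\sigma^{-4})$ is controlled by an increasing function of $\sum_{p_\plusOne} p_\plusOne^2+\sum_{p_\minusOne} p_\minusOne^2$; since the Cauchy--Schwarz inequality yields $|v^T x|^2\leq \|x\|^2$ for $v\in S$, the constant may be bounded in terms of $\sum_{x\in X_\plusOne}\|x\|^2+\sum_{x\in X_\minusOne}\|x\|^2$. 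The error term is thus uniform on $S$, and multiplying by $2\sigma^2$ produces
$$
2\sigma^2 D_{CS}^\sigma(v)=(v^T d)^2+\rho_\sigma(v),\qquad\sup_{v\in S}|\rho_\sigma(v)|=\mathcal{O}(\sigma^{-2}).
$$

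Next I would analyze the limit objective $g(v):=(v^T d)^2$ on $S$. Writing $v=\cos\theta\,v_\infty+\sin\theta\,u$ with $u\in S\cap v_\infty^\perp$, one obtains $g(v)=\|d\|^2\cos^2\theta$, so $\|d\|^2-g(v)=\|d\|^2\sin^2\theta$. Since $\min(\|v-v_\infty\|^2,\|v+v_\infty\|^2)=2(1-|\cos\theta|)\leq 2\sin^2\theta$, this yields the quadratic growth bound
$$
\|d\|^2-g(v)\;\geq\;\tfrac{\|d\|^2}{2}\min\bigl(\|v-v_\infty\|^2,\|v+v_\infty\|^2\bigr).
$$
This is the key structural fact: the maxima $\pm v_\infty$ of $g$ on $S$ are strict and non-degenerate because $m_\plusOne\neq m_\minusOne$ makes $\|d\|>0$.

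Finally I would combine the two. Since $v_\sigma$ maximizes $D_{CS}^\sigma$ on $S$, we have $2\sigma^2 D_{CS}^\sigma(v_\sigma)\geq 2\sigma^2 D_{CS}^\sigma(v_\infty)$, and the uniform expansion gives
$$
(v_\sigma^T d)^2\;\geq\;\|d\|^2+\rho_\sigma(v_\infty)-\rho_\sigma(v_\sigma)\;=\;\|d\|^2+\mathcal{O}(\sigma^{-2}).
$$
Hence $\|d\|^2-(v_\sigma^T d)^2=\mathcal{O}(\sigma^{-2})$, and the quadratic lower bound from the previous paragraph yields
$$
\min\bigl(\|v_\sigma-v_\infty\|^2,\|v_\sigma+v_\infty\|^2\bigr)\;\leq\;\tfrac{2}{\|d\|^2}\bigl(\|d\|^2-(v_\sigma^T d)^2\bigr)=\mathcal{O}(\sigma^{-2}).
$$
Taking square roots gives the desired $\mathcal{O}(\sigma^{-1})$ bound.

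The main obstacle I foresee is the uniformity claim in the first step: Proposition \ref{pr:mean} is stated pointwise in $(P_\plusOne,P_\minusOne)$, so one must verify carefully that the constant hidden in the $\mathcal{O}(\sigma^{-4})$ term depends on $X_\plusOne,X_\minusOne$ only through quantities invariant under (or at worst bounded over) the projection $v\in S$. Everything after that is standard quadratic perturbation on a compact manifold; the nondegeneracy $\|d\|>0$ guaranteed by the hypothesis $m_\plusOne\neq m_\minusOne$ is precisely what prevents the usual pitfall of a flat maximum and delivers the $\sigma^{-1}$ (rather than slower) convergence rate.
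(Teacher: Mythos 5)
Your proposal is correct and follows essentially the same route as the paper: apply Proposition \ref{pr:mean} to the projections, note that the error constant is uniform over $S$ because $|v^Tx|\leq\|x\|$, compare $D_{CS}^\sigma(v_\sigma)$ with $D_{CS}^\sigma(v_\infty)$ using maximality to get $\il{v_\sigma,v_\infty}^2\geq 1-\mathcal{O}(\sigma^{-2})$, and convert this to the distance bound via the identity $\|v_\sigma\mp v_\infty\|^2=2(1-|\il{v_\sigma,v_\infty}|)$. Your angle parametrization and the paper's inequality $\sqrt{1-h}\geq 1-h$ are interchangeable ways of carrying out the final quadratic-nondegeneracy step.
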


\begin{proof}
Clearly, by the Proposition \ref{pr:mean}
\begin{equation} \label{eq:osz}
\frac{2\sigma^2}{\|m_\plusOne-m_\minusOne\|^2}D_{CS}^\sigma(v)=
\il{v,v_{\infty}}^2+\mathcal{O}(\sigma^{-2}),
\end{equation}
and the constant in $\mathcal{O}(\sigma^{-2})$ can be bounded by
an increasing function of $\sum_{x_\plusOne} \|x_\plusOne\|^2+\sum_{x_\minusOne} \|x_\minusOne\|^2$.

Consider $v_\sigma \in S$. Without loss of generality
by taking $-v_\sigma$ in place of $v_\sigma$, if necessary, and applying the
fact that $D^\sigma_{CS}$ is an even function, we may assume
that $v_\sigma$ is nearer to $v_{\infty}$ than to $-v_\infty$, that is 
$
\|v_\sigma-v_{\infty}\| \leq \|v_\sigma+v_{\infty}\|. 
$
We are going to estimate from above the value of
$
\|v_\sigma-v_\infty\|.
$
Observe first that
\begin{equation} \label{eq:dista}
\begin{array}{c}
\il{v_\sigma,v_\infty}=\frac{1}{2}(\|v_\sigma\|^2+\|v_\infty\|^2-\|v_\sigma-v_\infty\|^2) \\[0.3ex]
=1-\tfrac{1}{2}\|v_\sigma-v_\infty\|^2.
\end{array}
\end{equation}
This trivially yields that $\il{v_\sigma,v_\infty} \geq 0$.

On the other hand, since $D_{CS}^\sigma$ takes maximum in $v_\sigma$, 
by applying \eqref{eq:osz} twice, we get
$$
\begin{array}{l}
\il{v_\sigma,v_\infty}^2 
\geq \frac{2\sigma^2}{\|m_\plusOne-m_\minusOne\|^2}D_{CS}^\sigma(v_\sigma)
-C'\sigma^2 \\[0.5ex]
\geq \frac{2\sigma^2}{\|m_\plusOne-m_\minusOne\|^2}D_{CS}^\sigma(v_\infty)
-C'\sigma^2
\\[0.5ex] \geq \il{v_\infty,v_\infty}^2-C''\sigma^{-2}= 1-C''\sigma^{-2}
\end{array}
$$
for certain $C',C''>0$. 
Since $\sqrt{1-h} \geq 1-h$ (for $h \geq 0$) and $\il{v_\sigma,v_\infty}$ is nonnegative, this yields that
$$
\il{v_\sigma,v_\infty} \geq \sqrt{1-C''\sigma^{-2}} \geq 1-C''\sigma^{-2}.
$$
By applying \eqref{eq:dista} we conclude that
$\|v_\sigma-v_\infty\|^2< 2C''\sigma^{-2}$, which yields
$$
\|v_\sigma-v_\infty\| =\mathcal{O}(\sigma^{-1}).
$$
\end{proof}

%

\subsection{Classification theory}

Let us recall that the objective function 
$$
D_{CS}(f,g)=-(H_2(f)+H_2(g)+2\log \cip(f,g)),
$$ 
consists of two parts, the entropy term $H_2(f)+H_2(g)$ which serves the regularization purpose and $\cip(f,g)$ which ensures optimal discrimination of
the classes. Maximization of $D_{CS}$ and classifying data based on the 1-dimensional kernel density estimation  leads to the construction of multithreshold linear classifier. Optimization procedure tries to simultaneously maximize the margins between classes and to minimize the number of resulting thresholds.

As Anthony~\cite{anthony2004generalization} showed, the considered class of classifiers have bounded generalization error dependent on the number of thresholds $k$:

\begin{theorem}{Generalization bounds (Anthony, 2004~\cite{anthony2004generalization})}
 With probability at least $1-\delta$, for $N$ points in $\mathbb{R}^d$: 
 $$
 \mathrm{E} \leq \mathrm{E}_{emp} + \sqrt{ \tfrac{8}{N}\left ( (d+k-1)\log \left ( \tfrac{2eNk}{d+k-1} \right ) + \log \left ( \tfrac{14k^2}{\delta} \right ) \right ) }
 $$
 where $\mathrm{E}$ is the generalization error and $\mathrm{E}_{emp}$ is the training (empirical) error.
\end{theorem}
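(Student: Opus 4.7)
The plan is to combine a VC-dimension estimate for the class of $k$-threshold linear classifiers with the classical Vapnik--Chervonenkis uniform-convergence inequality, following the standard PAC-learning recipe. Let $\mathcal{H}_{k,d}$ denote the class of maps $x \mapsto \pi(v^T x)$ where $v \in S^{d-1}$ and $\pi : \R \to \{-1,+1\}$ changes sign at most $k$ times. The natural degrees-of-freedom count gives $(d-1) + k = d+k-1$: the direction contributes $d-1$ parameters on the unit sphere, the thresholds contribute $k$, and the labeling of the $k+1$ intervals adds only a bounded combinatorial factor since consecutive intervals must alternate in sign. The first technical step is to upgrade this heuristic to the rigorous bound $\mathrm{VCdim}(\mathcal{H}_{k,d}) \leq d+k-1$ via a shattering argument: for any set of $d+k$ points in general position, one exhibits a $\pm 1$ dichotomy that no $k$-threshold linear classifier can realize.

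Once the VC-dimension bound is in hand, I would feed it into Sauer--Shelah to control the growth function, $\Pi_{\mathcal{H}_{k,d}}(m) \leq (em/(d+k-1))^{d+k-1}$, and then inflate by an $\mathcal{O}(k)$ (respectively $\mathcal{O}(k^2)$) factor that absorbs both the admissible interval-labeling patterns and the union-bounding in the symmetrization step. This accounts both for the extra $k$ appearing inside the logarithm and for the constant $14k^2$ in the $\log(14k^2/\delta)$ term. I would then invoke the standard uniform-deviation inequality
$$\Pr\!\Bigl(\sup_{h \in \mathcal{H}_{k,d}} |\mathrm{E}(h) - \mathrm{E}_{emp}(h)| > \varepsilon\Bigr) \leq 4\,\Pi_{\mathcal{H}_{k,d}}(2N) \exp\!\bigl(-N\varepsilon^2/8\bigr),$$
set the right-hand side equal to $\delta$, and solve for $\varepsilon$. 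The $8/N$ prefactor emerges from the exponential, the $2eNk/(d+k-1)$ inside the logarithm from the Sauer--Shelah estimate with the $k$-inflation, and the $14k^2/\delta$ from the combined union-bound and deviation-inequality constants.

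The hard part is securing the sharp VC-dimension bound of exactly $d+k-1$, rather than a softer estimate such as $\mathcal{O}((d+k)\log(d+k))$. The parameter-counting intuition is only suggestive; the rigorous upper bound requires a delicate combinatorial argument, plausibly by induction on $k$, exploiting the fact that $k$ parallel hyperplanes slice any $N$-point set in $\R^d$ into at most polynomially many distinct dichotomies. This is precisely the combinatorial contribution of Anthony's work, and any attempt to match the constants inside the logarithms of the final bound must reproduce the argument in its sharp form; a looser VC bound would yield an asymptotically similar but quantitatively degraded generalization guarantee.
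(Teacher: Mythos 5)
First, note that the paper offers no proof of this statement at all: it is quoted verbatim as an imported result of Anthony~\cite{anthony2004generalization}, so there is no internal argument to compare yours against. Judged on its own terms, your reconstruction has a genuine gap, and it sits exactly where you yourself locate the ``hard part.'' The lemma $\mathrm{VCdim}(\mathcal{H}_{k,d}) \leq d+k-1$ is false. Take $d=1$ and $k=2$: three collinear points $x_1<x_2<x_3$ are shattered, since every one of the $8$ labelings of a sorted triple has at most two sign changes, whereas your bound would cap the VC dimension at $d+k-1=2$. More generally, for $d=1$ the class of functions with at most $k$ sign changes shatters exactly $k+1$ points, and for $k=1$ an affine threshold in $\R^d$ shatters $d+1$ points; in both cases the VC dimension is at least $d+k$, strictly above $d+k-1$. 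So the shattering argument you propose (``for any $d+k$ points in general position, exhibit an unrealizable dichotomy'') cannot succeed. The paper itself flags this in the Related Work section: the exact VC dimension of multithreshold linear classifiers is an open problem, which is a strong hint that Anthony's bound is not obtained by pinning down the VC dimension.

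The quantity $d+k-1$ in the theorem is not a VC dimension but an exponent in a direct growth-function estimate: Anthony bounds the number of dichotomies of $N$ points realizable by $k$ parallel hyperplanes by counting configurations (essentially a product of a Sauer-type bound for the projection direction and one for the $k$ thresholds on the line), arriving at something of the form $\bigl(2eNk/(d+k-1)\bigr)^{d+k-1}$, and only then plugs this into the symmetrization inequality. Your second step (feeding a cardinality bound into the uniform-deviation inequality with the $8/N$ prefactor and solving for $\varepsilon$) is the right template, but the $\log(14k^2/\delta)$ term is not an artifact of interval-labeling patterns as you suggest; it arises from a structural-risk-minimization union bound over $k=1,2,\dots$ with confidence budget proportional to $1/k^2$ (note $4\cdot 2\cdot \pi^2/6 \approx 13.2 \leq 14$). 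To repair the proposal you should abandon the sharp VC-dimension claim, prove the growth-function bound directly, and make the stratification over $k$ explicit.
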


As it has been previously shown, minimization of the Renyi's entropy leads to the choice of projections where each class is as condensed as possible. In a natural way this means that this process leads to the minimization of number of resulting thresholds (the value of estimated density is monotonically decreasing when we move away from the closest point with Gaussian function centered in it). 

The following theorem shows that for $k-$level threshold linear classifier restricted to the sphere, the generalization bounds can be improved by maximizing the margin $\margin$.

\begin{theorem}{Generalization bounds with margin (Anthony, 2004~\cite{anthony2004generalization})}
 With probability at least $1-\delta$, for $N$ points in $\mathbb{R}^d$ such that $\|x_i\|\leq 1$, $\|v\|=1$ and margin $\margin \in (0,1]$:
 
 $$
 \mathrm{E} \leq \mathrm{E}_{emp} + \sqrt{ \tfrac{8}{N}\left ( \tfrac{1152}{\margin^2} \log \left ( 9N \right ) + k \log \left ( \tfrac{10}{\margin} \right ) + \log \left ( \tfrac{4}{\delta} \right ) \right ) }
 $$
\end{theorem}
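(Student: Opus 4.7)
The plan is to obtain this as a margin-based uniform convergence bound, following the standard machinery that underlies Anthony's work: bound a scale-sensitive complexity measure of the hypothesis class at resolution $\margin$, then feed this into a covering-number style generalization theorem. Concretely, I would set up the class $\mathcal{H}_k$ of all functions of the form $x \mapsto \phi(v^T x)$, where $\|v\|=1$ and $\phi:\R \to \{-1,+1\}$ is a step function switching sign at $k$ thresholds $t_1 < \cdots < t_k$, and work with the margin-loss version on the input ball $\{\|x\|\le 1\}$. The overall bound will come from plugging a covering number estimate into a Vapnik-type inequality (a square-root tail of the form $\sqrt{\frac{8}{N}\bigl(\log \mathcal{N}(\margin,\mathcal{H}_k,N)+\log(C/\delta)\bigr)}$), so the real work is estimating the covering number at scale $\margin$.

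First I would decompose the covering problem into a ``direction'' part and a ``threshold'' part. For the direction, the function $x \mapsto v^T x$ on the unit ball with $\|v\|=1$ is a bounded linear functional, and the classical margin bound of Bartlett/Shawe-Taylor gives a fat-shattering dimension at scale $\margin/2$ of order $O(1/\margin^2)$, which via Alon-Ben-David-Cesa-Bianchi-Haussler translates into a $\margin$-covering number of the real-valued class of size $(9N)^{O(1/\margin^2)}$. This is precisely where the $\frac{1152}{\margin^2}\log(9N)$ term originates; the constant $1152$ is the standard explicit constant appearing in the ABCH covering-number conversion for linear functions restricted to the sphere. Second, once the direction $v$ is fixed, quantizing the $k$ thresholds $t_i\in[-1,1]$ on a grid of mesh $\margin/5$ costs $(10/\margin)^k$ choices, contributing the additive $k\log(10/\margin)$ term. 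Concatenating a direction-cover and a threshold-cover yields a cover of $\mathcal{H}_k$ at scale $\margin$ in the empirical $L^\infty$ metric on any $N$-sample.

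Third, I would apply a standard symmetrization/double-sample argument (as in Theorem 10.1 of Anthony and Bartlett) to convert this covering number into the tail bound
\[
P\bigl(\mathrm{E} - \mathrm{E}_{emp} > \varepsilon\bigr) \le 4\,\mathcal{N}(\margin,\mathcal{H}_k,2N)\exp(-N\varepsilon^2/8),
\]
set the right-hand side equal to $\delta$, and invert. The additive $\log(4/\delta)$ term in the statement is exactly what comes out of this inversion (the $4$ coming from the symmetrization constant), while the two complexity terms inside the radical come from the two factors of the covering number.

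The main obstacle I anticipate is not conceptual but bookkeeping: one must make sure the quantization of $v$ and of the $t_i$'s cooperate, so that if a test point is classified with margin at least $\margin$ by the true classifier, then every element of the cover within distance $\margin/2$ of it in sup-norm agrees on the classification. This forces a careful split of the tolerance into halves (half for the direction cover, half for the threshold grid), and dictates the constants $1152$ (from the direction side) and $10$ (from the threshold side). Once that split is fixed, the rest is routine plugging into the covering-number generalization theorem and simplifying the algebra to match the claimed form.
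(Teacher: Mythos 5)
The paper does not prove this statement at all: it is quoted verbatim as an imported result, attributed to Anthony (2004)~\cite{anthony2004generalization}, and used as a black box to argue that margin maximization plus threshold-count reduction controls generalization. So there is no in-paper proof to compare yours against; the only meaningful question is whether your reconstruction is a credible route to Anthony's theorem.

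Your outline is the right machinery and matches the standard architecture of such results (fat-shattering dimension of the bounded linear class at scale $\Theta(\margin)$, conversion to covering numbers, a combinatorial/grid count for the $k$ thresholds, and a symmetrization bound of the form $4\,\mathcal{N}\exp(-N\varepsilon^2/8)$ that produces the $\sqrt{8/N}$ and $\log(4/\delta)$ terms). However, as written it is a proof plan rather than a proof, and the gaps are exactly where the theorem's content lives. First, every specific constant --- $1152$, the $9$ inside $\log(9N)$, the $10/\margin$ mesh, even the choice to split the tolerance in halves --- is asserted to be ``the standard constant'' rather than derived; without tracking these through the fat-shattering-to-covering conversion you cannot certify the stated bound as opposed to one with the same shape and different constants. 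Second, the composition step is glossed over: the step function $\phi$ is not Lipschitz, so a sup-norm cover of the linear projections does not automatically yield a cover of the composed classifiers. One must argue at the level of the margin-loss (a point classified with margin at least $\margin$ keeps its label under any perturbation of the projection by less than $\margin/2$ and of the thresholds by less than $\margin/2$), and verify that the double-sample theorem you invoke is the margin-loss version, not the plain $0$--$1$ version. You identify this as ``bookkeeping,'' which is fair in spirit, but it is precisely the part that would have to be written out to turn the sketch into a proof of this particular inequality.
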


According to Theorems \ref{th:mar1} and \ref{th:mar2} minimization of $\cip$ leads (in the limiting case) to the maximization of the separating margins. So our method is truly aimed at structural risk minimization. We search for such multithreshold linear classifier which minimizes the generalization error through selection of the structurally simplest hypothesis. This shows another similarity to the SVM model, but adapted to multithreshold case.

\section{Practical considerations}

In this section we deal with some practical considerations regarding our optimization problem, which lies in maximizing the Cauchy-Schwarz divergence of the kernel density estimation of projections of our data. As it
has been proven in Theorem~\ref{th:scale}, this problem is scale invariant, so we can constrain domain
of searched parameters into the unit sphere in $\R^d$. In practice this limitation reduces not only the parameter space, but also the risk of numerical instability, while coming at no additional computational cost. 

In the optimization we apply the typical steepest ascent approach. It is a common knowledge that such procedure can be performed for maximization of function $f$ on the sphere by simply projecting the gradient onto the tangent hyperplane and performing the usual line search procedure on the big circle given by gradient's direction. Given the starting point $v_0$ such that $\| v_0 \|=1$ it can be expressed as following iterative procedure for step sizes $\alpha_t$:
\begin{equation*}
 \begin{aligned}
   h_t = &  \nabla_v f(v_t) - \langle \nabla_v f(v_t) , v_t \rangle v_t, \\    
   v_{t+1} = &  v_t \cos( \alpha_t ) +  \sin( \alpha_t ) h_t/\|h_t\|.
 \end{aligned}
 \end{equation*}
Let us now summarize the problem of $D_{CS}$ maximization with Silverman's rule for kernel density estimator window width.

\medskip



\noindent{\bf \RMC{} optimization problem. }{\em 
Given sets $X_\plusOne, X_\minusOne \subset \R^d$}
\begin{equation*}
\begin{aligned}
& \underset{v \in \mathbb{R}^d}{\text{maximize }}
-( H_2^{{\plusOne}}(v) + H_2^{{\minusOne}}(v) + 2 \log \cip_{X_{\plusOne}X_{\minusOne}}(v) ) \\
& \text{subject to }
 \left \| v \right \| = 1 \\
& \text{where} \\ 
&  
H_2^{\pm}(v)=-\log \cip_{X_\pm X_\pm}(v) \\
& 
\cip_{AB}(v)= \tfrac{1}{\sqrt{2\pi V_{AB}(v)}\cdot |A||B|} 
\sum_{a \in A,b  \in B}
\exp\big(-\tfrac{\langle v,a -b\rangle^2}{2V_{AB}(v)}\big),  \\
&  V_{AB}(v) = V_A(v)+V_B(v), \\
&  V_{A}(v) = ( (4/3)^{1/5}|A|^{-1/5} \sigma_{v^T A} )^2.
\end{aligned}
\end{equation*}

In order to perform steepest ascent optimization we need to compute gradient of $D_{CS}$ function.
We present its final formula and omit its obvious derivation.

\begin{equation*}
\begin{aligned}
\nabla D_{CS}(v)= & \frac{\nabla \cip_{X_\plusOne X_\plusOne}(v)}{\cip_{X_\plusOne X_\plusOne}(v)}+
\frac{\nabla \cip_{X_\minusOne X_\minusOne}(v)}{\cip_{X_\minusOne X_\minusOne}(v)}-
\frac{2\nabla \cip_{X_\plusOne X_\minusOne}(v)}{\cip_{X_\plusOne X_\minusOne}(v)},\\
\nabla \cip_{AB}(v) = & \tfrac{1}{2 V_{AB}(v)\sqrt{2\pi V_{AB}(v)}\cdot |A||B|} \sum_{a \in A,b \in B} \exp (-\tfrac{\langle v,a-b \rangle^2}{2V_{AB}(v)} )\\
& \left\{
 (\tfrac{\langle v,a-b \rangle^2}{2V_{AB}(v)}-1 )\nabla V_{AB}(v)-2\langle v,a-b \rangle (a-b)
\right\},\\
\nabla V_{AB}(v) =& \nabla V_{A}(v) + \nabla V_{B}(v),\\
 \nabla V_{A}(v) =& \frac{\left ( \frac{4}{3} \right )^\frac{2}{5}}{|A|^{12/5}}
\left(
|A| \cdot \sum_{a \in A}  \langle v,a \rangle a-\sum_{a \in A} \langle v,a \rangle \cdot \sum_{a \in A}a
\right).
\end{aligned}
\end{equation*}

It is easy to see that computation of both function value and its gradient is computationally expensive ($\mathcal{O}(N^2)$, where $N$ is the size of the training set). This issue is partially compensated by fact that in practice it is sufficient to perform just few steps of this process in order to find a local maxima. In the basic approach we choose random starting points from the sphere, run optimizations from them and select the one yielding the biggest value. However, it is also possible to start optimization from the solution given by some computationally cheap model, like for example a perceptron or a linear SVM with $C=1$. As a result, we can obtain a reasonable solution in quite short time (using just one optimization procedure). These methods are further investigated in the Evaluation Section.


\subsection*{Classifier complexity}
Classification using the actual density estimators on $\R$ requires $\mathcal{O}(N)$ operations (each training point has impact on the classification). This issue can be overcomed by constructing the actual $k$-threshold linear classifier from this density by search for points $t_1,\dots,t_k$ such that $\de{v^TX_\plusOne}(t_i)=\de{v^TX_\minusOne}(t_i)$ (see algorithm in Figure~\ref{al:construction}). 
\begin{figure}[H]
\begin{algorithmic}[1]
\STATE $x_1,...,x_N \gets \textbf{sort}(v^TX_\plusOne \cup v^TX_\minusOne$)
\STATE $Q \gets \de{v^TX_\plusOne}-\de{v^TX_\minusOne}$
\STATE $k\gets0$
\FOR { $i=2$ \textbf{to} $N$}
\IF {$\sign( Q(x_{i-1}) ) \neq \sign( Q(x_i) )$}
    \STATE $k\gets k + 1$
    \STATE $t_k \gets \textbf{binsearch}_{ x \in (x_{i-1},x_i) } Q(x)=0$
\ENDIF
\ENDFOR
\RETURN       $t_1,...,t_k $            

\end{algorithmic}
\caption{$k$-threshold linear classifier construciton for kernel density estimation of $X_\pm$ projections on given $v$} 
\label{al:construction}
\end{figure}
As a result, classification's complexity of the new points is decreased to $\mathcal{O}(d+\log(k))$ (binary search of $k$ midpoints on $\R$). In the Evaluation Section we also show that it is sufficient to run just few iterations of \textit{binsearch} to build such classifier. However, such operation destroys easy access to the estimation of $P(y|x)$, as similarly to other linear models we just have thresholds. In order to obtain such probabilities (confidences) one still needs to query all the training points. If such approach is too expensive one can change used kernel to the Epanechnikov or other with finite support.

\subsection*{Parameterization}
One can use different kernel width estimator by alternating the $V_{A}(v)$ term (and its gradient). In particular, in order to include the kernel window width scaling factor $\gamma$ it is sufficient to replace the variance term $V_A(v)$ in the previous equations with
$V_A^\gamma(v) := \gamma^2 V_A(v)$, and analogously $\nabla V_{A}(v)$ becomes $\nabla V_{A}^\gamma(v) := \gamma^2 \nabla V_{A}(v)$. As shown in the Evaluation section, this can be beneficial as the Silverman's rule tends to overestimate the required value~\cite{cao1994comparative}. Size of the $\gamma$ factor plays also the role of a bias--variance tradeoff coefficient. With bigger values the optimization will lead to the very simple 
single-threshold models (with a limiting case proven in the Theory Section), while the small values can lead to overfitting the data. Default value of $\gamma=1$ yields quite reasonable solutions (as showed in the Evaluation Section) but results can be improved by searching for (in most cases) smaller values. In general, regularization strength grows with $\gamma$.

It is also possible to include samples weights $w_x$ directly in the proposed formulation. The only modification needed is to put the weighted kernel density estimator 
$$ 
 \de{P}_\sigma = \frac{1}{\sum_{p \in P} w_p} \sum_{p \in P} w_p\nor(p,\sigma^2).
$$
\noindent It is worth noting that this weighting works on the basis of in-class weights, it cannot be directly applied to weight the whole class. On the other hand similar concept can be used to include the known input data uncertainty measure by using different $\sigma_x$ for each point.

\subsection*{Non-linear case}

For problems requiring non-linear model one can adapt the proposed approach. Direct kernelization would lead to higher computational complexity ($\mathcal{O}(N^3)$ per iteration), but there are other possible solutions. First, one can use Nystrom's method of kernel approximation~\cite{drineas2005nystrom} which does not require such complex operations. It is also possible to apply random projection techniques~\cite{huang2006extreme, huang2011extreme, hegde2007random}, which map the input space through some non-linear function (eg. RBF) as the preprocessing step. In particular, one can use clustering methods to seed the position of RBF function (as it is done in RBF networks~\cite{haykin2009neural}). This problem is, however, beyond the scope of this work and should be the topic of future research.

\section{Evaluation}

%
%

We evaluated our method using code written in \texttt{C++} with help of \texttt{boost}~\cite{karlsson2005beyond} library. Experiments were coducted on an Intel Xeon 2.67GHz machine. In the first phase we used ten well known UCI~\cite{asuncion2007uci} binary datasets, briefly summarized in Table~\ref{tab:summary_datasets}. 
\begin{table}[H]
\begin{center}
\begin{tabular}{lrrrr}
\hline
dataset	&	d 	& n & $|X_{\plusOne}|$ & $|X_{\minusOne}|$ \\
\hline
australian 	& 14 & 690& 383 & 307 \\
breast cancer 	& 9 & 683 & 444 & 239 \\
diabetes 	& 8 & 768 & 268 & 500  \\
fourclass 	& 2 & 862 & 307 & 555 \\
german number 	& 24 & 1000 & 700 & 300  \\
heart 		& 13 & 270 & 150 & 120  \\
ionosphere 	& 34 & 351 & 225 & 126 \\
liver-disorders	& 6 & 345 & 145 & 200  \\
sonar 		& 60 & 208 & 111 & 97  \\
splice 		& 60 & 1000& 483 & 517 \\
\hline
\end{tabular} 
\end{center}
\caption{Summary of UCi datasets used in tests.}
\label{tab:summary_datasets}
\end{table} 
During the second part of the evaluation we focused on real, cheminformatics data regarding chemical compounds activity prediction for selected proteins. For different models implementations we used the \texttt{scikit-learn}~\cite{pedregosa2011scikit} package, implementing the popular \texttt{libSVM}~\cite{chang2011libsvm} library. Three evaluation metrices are used in further parts of our paper: accuracy (ACC), Matthew's Correlation Coefficient (MCC) and weighted accuracy (WAC, also known as averaged accuracy).

\subsection{Toy dataset}

For better understanding of our method's characteristic we begin evaluation with XOR like dataset, composed of 100 samples from four two-dimensional Gaussians centered in points $(-1.-1), (+1,+1)$ (positive samples) and $(-1,+1), (+1,-1)$ (negative ones), see Figure~\ref{fig:toy}.
\begin{figure}[H]
 \begin{center}
  \includegraphics[width=3.5cm]{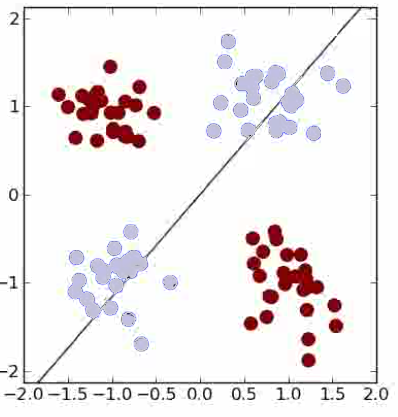}
  \includegraphics[width=3.5cm]{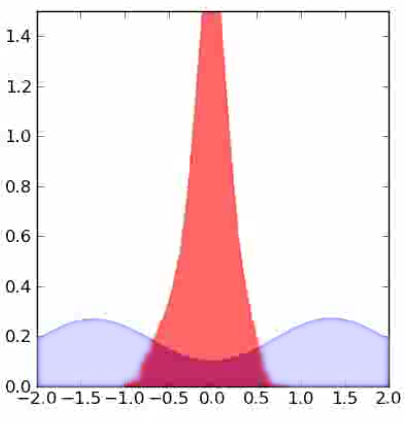}
  \includegraphics[width=3.5cm]{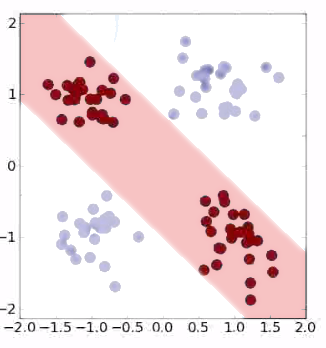}
  \\
  \includegraphics[width=3.5cm]{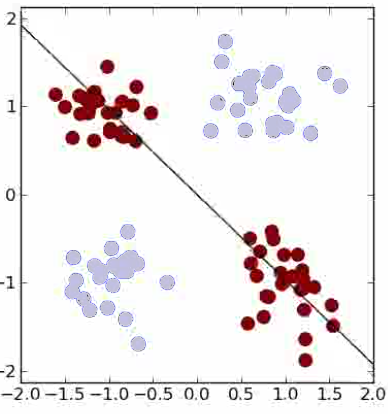}
  \includegraphics[width=3.5cm]{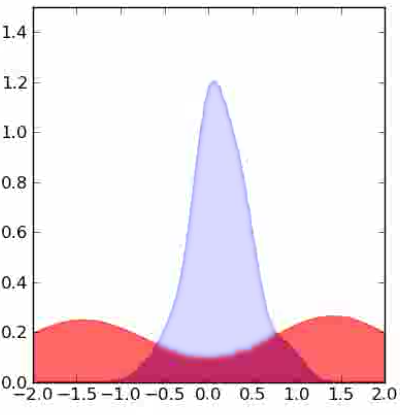}
  \includegraphics[width=3.5cm]{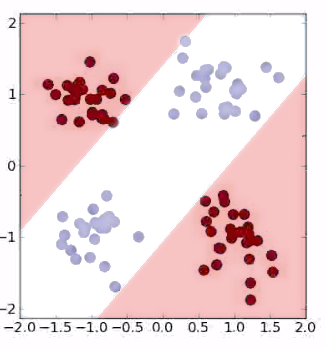}
  \caption{XOR like dataset composed of four Gaussians}
  \label{fig:toy}
 \end{center}
 
\end{figure}
Obviously this dataset is not linearly separable, but can be shattered with use of a 2-threshold linear classifier. In terms of $D_{CS}$ this data has two (up to the center symmetry) local maxima, one around $v_1=(\sqrt{2}/2,\sqrt{2}/2)$ and one around $v_2=(-\sqrt{2}/2,\sqrt{2}/2)$.
Solution given by $v_1$ has higher $D_{CS}$ as the spiked class is much narrower (its Renyi's entropy is lower), and as a result -- smallest of the two resulting margins is bigger. One can notice, that density estimation with Silverman's rule tends to overestimate the required kernel window size (splitted class is too flat).

Proposed method achieved almost 100\% scores under all considered metrices, while the linear models (both perceptron and SVM) achieved at most 50\% accuracy. Naturally, if kernelized with polynomial kernels, these methods would perform much better. This is however only a simple example to illustrate the potential benefits of multithreshold classifier while still using only the linear projection.

\subsection{Impact of regularization}

In the Theoretical Section we showed that minimization of $\cip$ leads to the separation with the large margin. However, if the chosen kernel width is too small, this may lead to overfitting issues due to the multithreshold nature of our model. In the worst case scenario, when our density estimation degenerates to almost atomic measure, we would get a perfect training set fitting with no generalization capabilities. This supports the need for the regularization based on the each classes densities' entropies and as a result optimization of $D_{CS}$ instead of just $\cip$. On the Figure~\ref{fig:K} one can find histogram of number of thresholds in our model for UCI datasets. We use Silverman's rule for kernel width estimation, which is known to rather overestimate this value (the optimal kernel width is often smaller than the one given by Silverman). However, even in such case one can notice, that purely $\cip$ based optimization leads to significantly more complex models (with higher number of 
thresholds)
. 
\begin{figure}[H]
\begin{center}
 \includegraphics[width=0.6\textwidth]{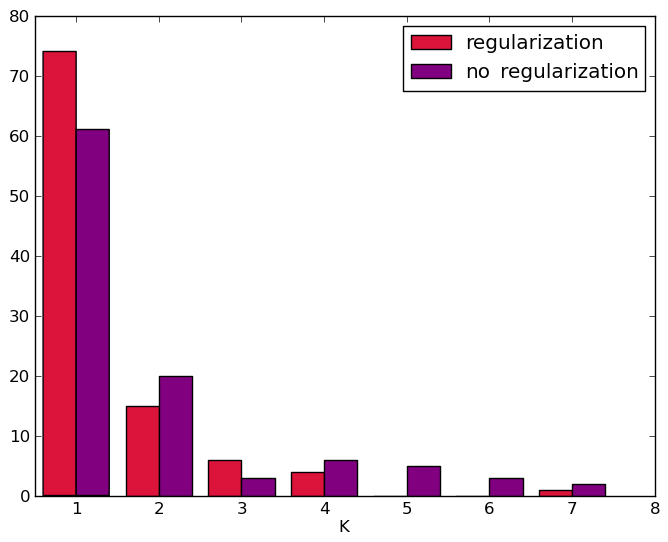}
\end{center}
 \caption{Histogram of number of resulting thresholds in classifiers built on the UCI datasets}
 \label{fig:K}
\end{figure}

\subsection{$D_{CS}$ and generalization}

In the previous sections we argued that maximialization of the Cauchy-Schwarz divergence should lead to the choice of a model with good generalization capabilities. In the Figure~\ref{fig:dcs_gen_cor} one can see how value of $D_{CS}$ is correlated with the Matthew's Correlation Coefficient (measured on the test set) for splice dataset. 
\begin{figure}[H]
\begin{center}
 \includegraphics[width=0.6\textwidth]{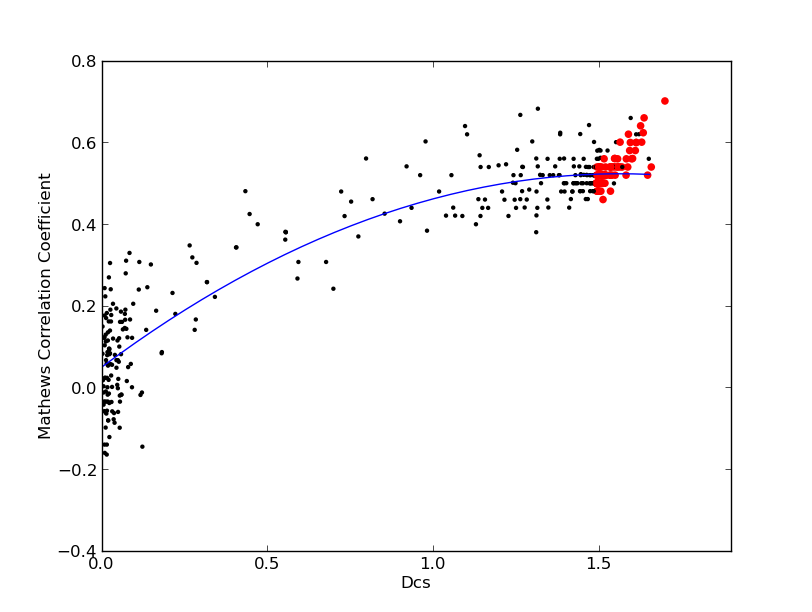}
\end{center}
 \caption{Correlation between $D_{CS}$ value (on x axis) and generalization capabilities (expressed as MCC on the test sets in 10 CV) for the splice dataset.h Big dots represent local maxima of $D_{CS}$ found during opitmization process.}
 \label{fig:dcs_gen_cor}
\end{figure}
Easily noticable relation suggests that $D_{CS}$ can be truly used as a criterion for the choice of model. Pearson's correlation coefficient between these two values for splice is about $0.9$. It seems also, that it is rather resistant to the overfitting (as there is no noticable decrease in the generalization for high $D_{CS}$ values). Correlations for the remaining datasets are reported in Table~\ref{tab:cor_gen}, all of them are statistically significant (in terms of correlation p-value). 
\begin{table}[H]
\begin{center}
\begin{tabular}{lrrr}
\hline
dataset	&	ACC & MCC & WAC \\
\hline
australian 	& 0.898 & 0.900 & 0.902 \\
breast cancer 	& 0.901 & 0.897 & 0.896 \\
diabetes 	& 0.494 & 0.611 & 0.624 \\
fourclass 	& 0.245 & 0.374 & 0.393 \\
german number 	& 0.407 & 0.569 & 0.575 \\
heart 		& 0.726 & 0.726 & 0.728 \\
ionosphere 	& 0.537 & 0.532 & 0.518 \\
liver-disorders	& 0.348 & 0.350 & 0.357 \\
sonar 		& 0.645 & 0.635 & 0.644 \\
splice 		& 0.943 & 0.941 & 0.943 \\
\hline
\end{tabular} 
\end{center}
\caption{Mean correlation between $D_{CS}$ and the generaliztaion capabilities across 10-folds of cross validation.}
\label{tab:cor_gen}
\end{table}
First, we see that in all cases there is a moderate to strong positive correlation. Second, these results confirm that our method is aimed at balanced measures (like WAC and MCC) while in the same time not well suited for accuracy (which by its definition prefers non-balanced models). In further part of our paper we focus only on these two metrics.

%
\subsection{UCI binary classification}

In the following part we will compare the efficiency of Multithreshold Entropy Linear Classifier (\RMC), Support Vector Machines (SVM), Support Vector Machines with class balancing (SVM-B) and Perceptron. SVM-B is the SVM model with $C$ value splitted into $C_\plusOne$ and $C_\minusOne$ invertibly proportional to the corresponding class sizes. All experiments are performed in 10-fold cross validation.

We first investigated how well these four models work when ran with default parameters (as given in \texttt{scikit-learn} library, which means $C=1$ for SVM models). Figure~\ref{fig:UCI_no_tuning} shows obtained results in terms of WAC measure (results for MCC were analogous). 
\begin{figure}[h]
 \begin{center}
 \includegraphics[width=\textwidth]{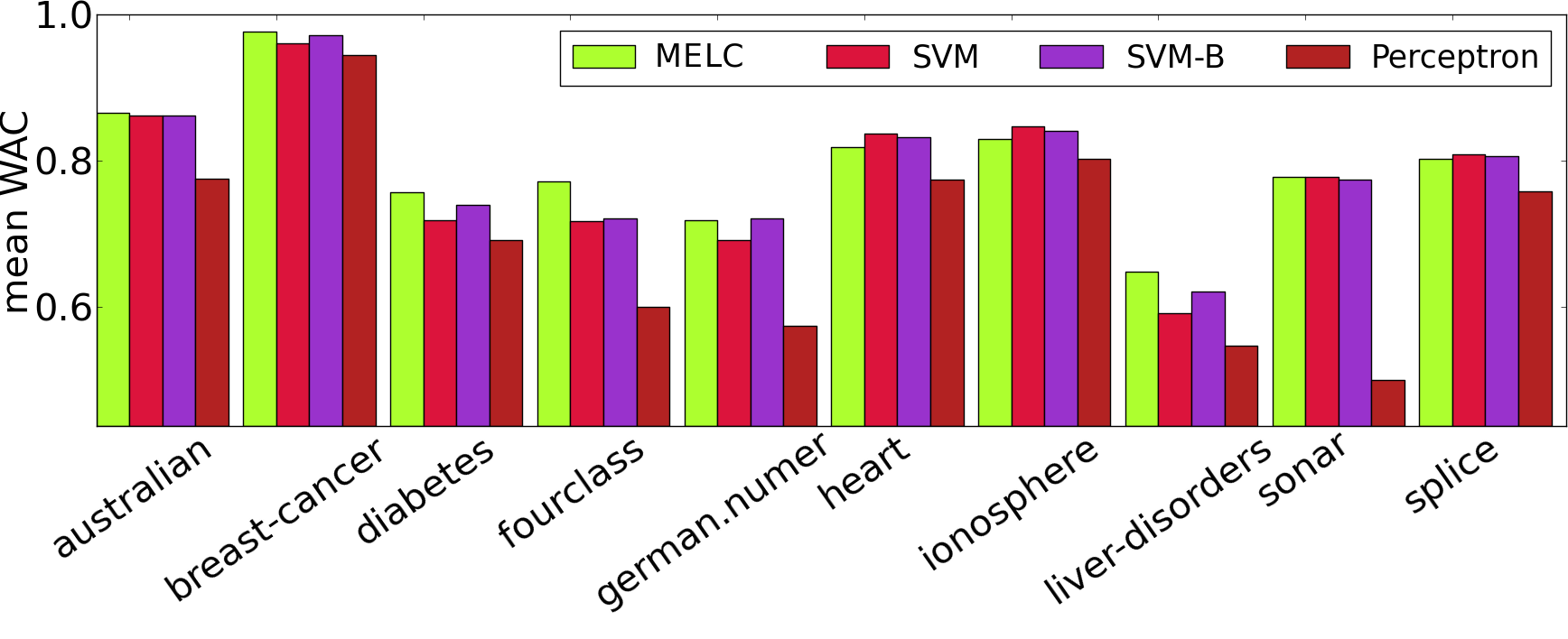}
 \end{center}
 \caption{Comparision of 10-fold cross validation WAC scores with default parameters}
 \label{fig:UCI_no_tuning}
\end{figure}
Without tuning of any model, \RMC{} obtained results comparable with SVM for most datasets, and outperformed it for a few (including liver-disorders, fourclass and diabetes). Results of perceptron were significantly worse in all cases. In nine of ten datasets \RMC{} build a linear classifier, and in case of fourclass dataset, a 3-threshold linear classifier. This supports our claim that the regularization prevents model from selecting too big $k$ values. However, it is worth noting that even though \RMC{} gained similar mean WAC as SVMs for some problems, it built different decision models. In particular, after investigation of results of individual folds, in some cases our method significantly outperformed SVM and vice-versa. It supports our claim, that even though there are important theoretical connections between these models, they result in different classifiers.

As it was previously stated, process of optimization of $D_{CS}$ may be computationally expensive. To deal with this problem one can perform single (or few) gradient based optimization from solutions given by some other, cheapier models. Comparison of the results obtained by our approach seeded with $v$ found by SVMs and perceptron are plotted in Figure~\ref{fig:UCI_init}. 
\begin{figure}[h]
 \begin{center}
 \includegraphics[width=\textwidth]{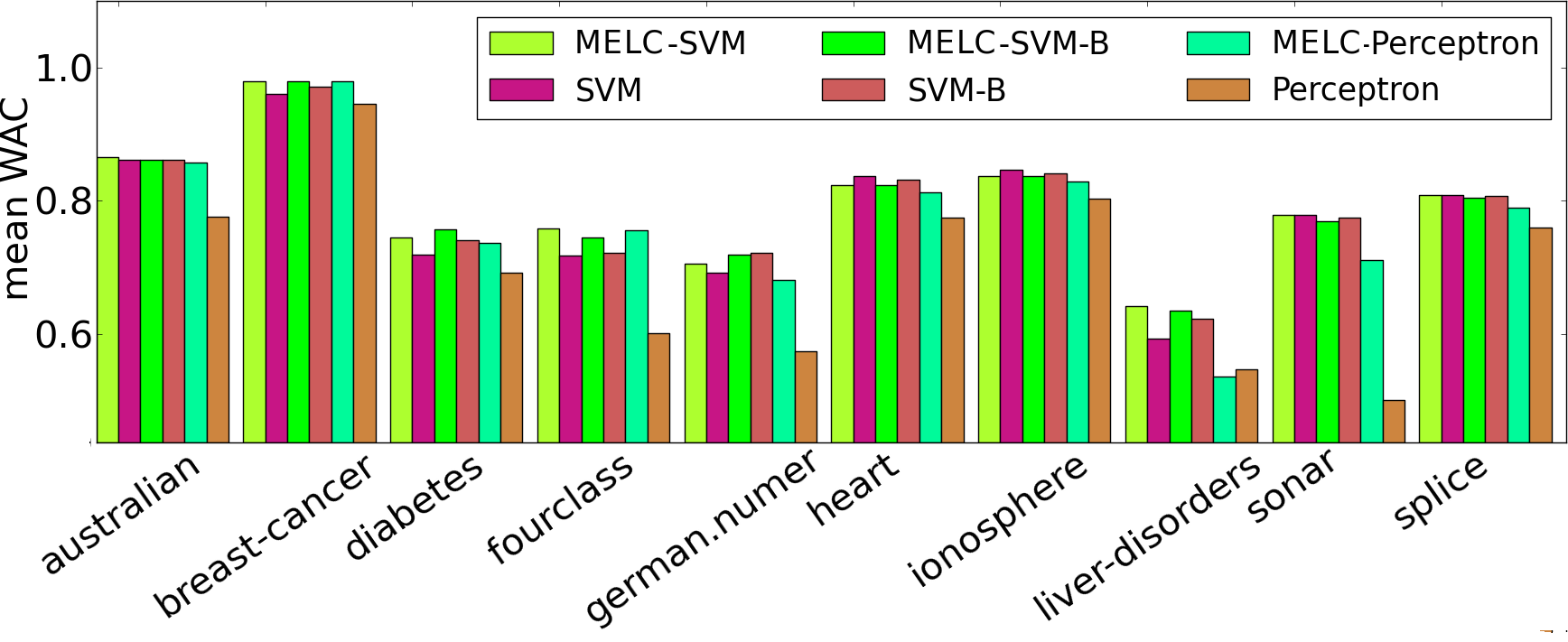}
 \end{center}
 \caption{Comparision of 10-fold cross validation WAC scores for \RMC{} starting from solution given by SVM, SVM-B and perceptron (with default parameters)}
 \label{fig:UCI_init}
\end{figure}
One can notice, that such initialization can lead to quite reasonable solutions. Starting from perceptron solutions generally lead to much worse scores, as this model finds completely different type of solutions than \RMC{} does. In case of SVM it seems possible to exploit already performed optimization. In particular, in our experiments rather low dimensional problems from UCI library can be well solved by starting from random points sampled uniformly from the unit sphere. In contrast, when number of dimensions is significantly higher and the optimization problem is harder it is more valuable to initialize the weights vector by running optimization from balanced SVM solution (for example, with $C=1$). Such an approach is further used in the last section of the evaluation.

We have shown how \RMC{} behaves when treated as non-parametric model. However, similarly to the $C$ parameter in SVM formulation, we can control the strength of the regularization. 
In Table~\ref{tab:UCI_tuning} one can find WAC scores for \RMC{} (with fitted $\gamma$) as compared to SVM and balanced SVM (with fitted $C$). 
\begin{table}[h]
\begin{center}
\begin{tabular}{lrrr}
\hline
dataset	&	\RMC{} 	& SVM	& SVM-B \\
\hline

australian 	& \textbf{0.868} [1.0] & 0.862 & 0.862 \\
breast cancer 	& \textbf{0.979} [1.0] & 0.969 & 0.972 \\
diabetes 	& \textbf{0.758} [1.0] & 0.727 & 0.747 \\
fourclass 	& \textbf{0.843} [4.0] & 0.720 & 0.727 \\
german number 	& \textbf{0.726} [1.1] & 0.691 & 0.722 \\
heart 		& 0.836 [1.0] & 0.837 & \textbf{0.838} \\
ionosphere 	& 0.848 [1.0] & \textbf{0.862} & 0.860 \\
liver-disorders	& 0.658 [2.9] & \textbf{0.677} & 0.659 \\
sonar 		& \textbf{0.791} [1.0] & 0.790 & 0.790 \\
splice 		& \textbf{0.810} [1.0] & \textbf{0.810} & \textbf{0.810} \\


\hline
\end{tabular} 
\end{center}
\caption{Comparision of 10-fold cross validation WAC scores for \RMC{} and SVM, SVM-balanced (SVM-B) with optimized parameters. Mean number of thresholds for \RMC{} is reported in square brackets}
\label{tab:UCI_tuning}
\end{table}
Obtained results resemble ones from the previous experiments, \RMC{} obtained similar results to the SVM, with some datasets showing superiority of the entropy based approach. In particular, in case of fourclass dataset one can see even bigger advantage of using multithreshold function over the simple linear classifier. It is also worth noting, that \RMC{} parameter has much more clear geometrical interpretation than SVM's parameter $C$ (which can be seen either as abstract weight of training errors or as an upper bound on the size of Lagrange multipliers). The parameter $\gamma$, or in general the formula for $V_A(v)$, gives the estimation of optimal kernel width in one dimensional projection of $A$ on $v$. There are many existing studies~\cite{Hammann2009P450SVM,zhang2013evaluation,subasi2013classification} and formulas for such objects, in particular it is possible to perform adaptive kernel width~\cite{van2003adaptive} where each point $x$ have its own kernel width $\sigma_x$. 

\subsection{Compounds activity prediction}

Final part of our evaluation was performed on cheminformatical data. The task is to predict whether a chemical compound is active, that is binds to a given protein. We used ten different proteins and corresponding sets of molecules with known (empirically tested) activity. Each compound was represented as the fixed length bit sequence using the SMARTS patterns \cite{padel}, which is one of the commonly used fingerprints (data representations) in such tasks \cite{smusz2013influence}. These gives us ten different binary datasets, summarized in Table~\ref{tab:summary_proteins}.

\begin{table}[H]
\begin{center}
\begin{tabular}{lrrrr}
\hline
protein	&	d 	& n & $|X_{\plusOne}|$ & $|X_{\minusOne}|$ \\
\hline
5-HT$_\text{2A}$ 	& 82 	& 2686 	& 1835 	& 851 \\
5-HT$_\text{6}$ 	& 109 	& 1831 	& 1490 	& 341 \\
5-HT$_\text{7}$ 	& 108 	& 1043 	& 704 	& 339 \\
cathepsin 		& 116 	& 1188 	& 245 	& 943 \\
D$_\text{2}$ 		& 137 	& 6215 	& 3342 	& 2873 \\
hERG 			& 130 	& 4928 	& 496 	& 4432 \\
HIV integrase 		& 130 	& 1015 	& 101 	& 914 \\
HIV protease 		& 134 	& 4052 	& 3155 	& 897 \\
M$_\text{1}$ 		& 123 	& 1697 	& 759 	& 938 \\
SERT 			& 129 	& 5231 	& 3559 	& 1672 \\

\hline
\end{tabular} 
\end{center}
\caption{Summary of cheminformatics datasets used in tests. SubFP~\cite{smusz2013influence} is used for molecules representation.}
\label{tab:summary_proteins}
\end{table} 

Similarly to the previous experiments, we compare \RMC{} with fitted $\gamma$ parameter ($\gamma \in \{ 0.1,0.2,...,1.1 \}$) with balanced linear SVM with fitted $C$. Greedy gradient optimization is performed from the set of starting points consisting of random points (uniformly selected from the unit sphere), solution of balanced SVM with $C=1$, and solution of perceptron. The model with highest $D_{CS}$ value is selected. Conducted experiments, summarized in Table~\ref{tab:reults_proteins}, show that our method is a competetive model for this kind of data. It is clear that for some proteins (like 5-HT$_\text{2A}$ or cathepsin, see Figure \ref{fig:mlc}) the internal data geometry can be better exploited using multithreshold linear classifier. Namely such model can detect, contrary to 
single threshold linear model, some underrepresented classes of active molecules which can be of high importance in the the search for new proteins' ligands.
\begin{table}[H]
\begin{center}
\begin{tabular}{lrr|rr}
\hline
protein					& MCC & 	& WAC   &       			\\
					& \RMC{} & SVM-B & \RMC{}	& SVM-B 					\\
\hline
5-HT$_\text{2A}$ 	& \textbf{0.434} [2.8] & 0.379& \textbf{0.725} [2.8]	& 0.703					 \\ 
5-HT$_\text{6}$ 	& \textbf{0.604} [3.0] & 0.593	& \textbf{0.835} [3.0] 	& 0.834 			 \\
5-HT$_\text{7}$ 	& \textbf{0.464} [9.8] & 0.435	& \textbf{0.735} [9.8] 	& 0.723				 \\
cathepsin 		& \textbf{0.530} [1.0] & 0.476		  & \textbf{0.796} [1.0] 	& 0.779		\\	
D$_\text{2}$ 		& 0.441 [1.0] & \textbf{0.442}		 & 0.720 [1.2] 		& \textbf{0.721} 	\\
hERG 			& \textbf{0.320} [1.5] & 0.304		      & \textbf{0.740} [1.3] 	& 0.738	\\		
HIV integrase 		& \textbf{0.543} [4.6] & 0.515		& 0.834 [1.1] 		& \textbf{0.835}	 \\
HIV protease 		& \textbf{0.501} [1.0] & 0.493		 & \textbf{0.782} [1.0]	& \textbf{0.782} 	\\
M$_\text{1}$ 		& \textbf{0.536} [3.6] & 0.532		 & \textbf{0.769} [3.6] 	& 0.766			\\
SERT 			& \textbf{0.439} [1.0] & 0.438		      & \textbf{0.734} [1.4]	& 0.733	\\		

\hline
\end{tabular} 
\end{center}
\caption{Summary of results for cheminformatics data.}
\label{tab:reults_proteins}
\end{table} 

By examining scores for some particular folds we can see that despite similarities, \RMC{} and SVM have difficulties in classifying
different datasets. In particular one can see on Figure~\ref{fig:folds_proteins} that for 5-HT$_\text{2A}$ dataset, third fold was the hardest one (in terms of MCC) for SVM while the same data seemed as easy for \RMC{} as the first or the second one. This shows that resulting models are indeed different.
\begin{figure}[h]
 \begin{center}
 \includegraphics[width=\textwidth]{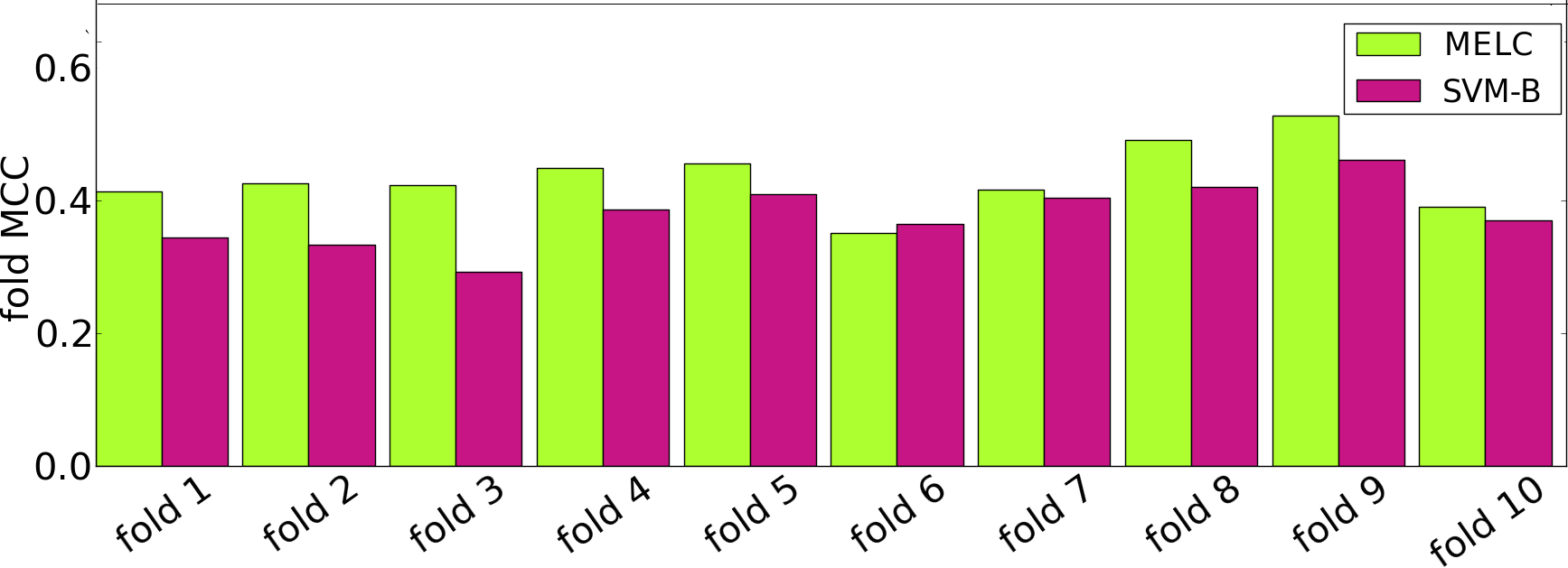}
 \end{center}
 \caption{Matthew's Correlation Coefficient for each fold of 5-HT$_\text{2A}$ dataset using \RMC{} and balanced SVM.}
 \label{fig:folds_proteins}
\end{figure}

Naturally, better overall results could be obtained using kernelized SVM (with RBF kernel), although it would lead to creation of very complex models (number of support vectors for these datasets varies between 1000 and 2000). As the result, constructed classifier is big and slow (it consists of about 100,000 numbers, and requires thousands of $\exp$ evaluations), while at the same time 
\RMC{} builds very light model, consisting of about $d+3$ numbers ($d+10$ in case of 5-HT$_\text{7}$). It is an important factor, as speed of a resulting model is an important aspect for actual applications of compounds activity classifiers, which should be able to process huge databases of possible molecules. 

We have also checked how many iterations of \textit{binsearch} is required to build a good (close to the density based) $k$-threshold linear classifier. For considered datasets, performing just one iteration (placing the threshold in the middle between two points projections) led to very similar results (see Table~\ref{tab:iters_proteins}). Performing five iterations led to exactly the same scores as achieved with density based model.

\begin{table}[H]
\begin{center}
\begin{tabular}{lrrrrr}
\hline
protein			& $i=1$ & $i=2$& $i=3$ & $i=4$ & $i=5$ \\
\hline
5-HT$_\text{2A}$ 	& 0.001 & 0	& 0	& 0 	& 0\\
5-HT$_\text{6}$ 	& 0.001 & 0	& 0	& 0 	& 0\\
5-HT$_\text{7}$ 	& 0.004 & 0.003 & 0.003	& 0.001	& 0\\
cathepsin 		& 0	& 0	& 0	& 0 	& 0\\
D$_\text{2}$ 		& 0.001	& 0.001	& 0.001	& 0 	& 0\\
hERG 			& 0.001 & 0.001 & 0.001	& 0.001	& 0\\
HIV integrase 		& 0.003 & 0.003 & 0.002	& 0.002	& 0\\
HIV protease 		&0	& 0	& 0	& 0 	& 0\\
M$_\text{1}$ 		& 0	& 0.001	& 0	& 0 	& 0\\
SERT 			& 0 	& 0 	& 0	& 0 	& 0\\

\hline
\end{tabular} 
\end{center}
\caption{Differences between MCC scores of density based classifier and $k$-threshold after $i$ iterations of binsearch. Differences for WAC were even smaller}
\label{tab:iters_proteins}
\end{table} 

To sum up, the evaluation on the real, cheminformatics dataset lead to the following conclusions regarding proposed model:
\begin{itemize}
 \item obtained results are (in most cases) better than those obtained by balanced SVM,
 \item resulting model has the same complexity as linear models (and rows of magnitude smaller than kernelized ones),
 \item internal data geometry of chemical compounds can be better exploited using multithreshold model,
 \item multithreshold structure might lead to detection of underrepresented active/inactive compounds families,
 \item just a few iterations of binsearch are requried to convert a density based method to actual multithreshold function. 
\end{itemize}

\section{Conclusions}

In this paper we presented a novel multithreshold classification method based on Renyi's quadratic entropy. Proposed model is based on search for the best linear projection on $\R$ in terms of Cauchy-Schwarz divergence of kernel estimation of the data projection. We showed its theoretical justification and properties, including scale invariance and relations to the 
largest margin SVM classifier.  

We proposed a simple, gradient based constrained optimization method for the construction of density-based classifier. However, it remains an open issue how to efficiently optimize it, as outlined approach has high computational complexity. We also showed how such a classifier can be efficiently converted to the $k$-threshold linear classifier.

During evaluation we studied how proposed model behaves on UCI binary datasets as well as real data coming from cheminformatics. In most cases, \RMC{} behaved better than balanced SVM in terms of balanced evaluation measures (WAC and MCC). We also investigated existance of correlation between our criterion and the generalization error. Obtained results support our claim that proposed method performs structural risk minimization.


%



  \section*{Acknowledgments}

This work was partially founded by National Science Centre Poland grant no. 2013/09/N/ST6/03015.

The authors would like to thank Igor Podolak, Phd for his invaluable contribution to our work, discussions, suggestions and criticism. We would also like to thank Sabina Smusz, MSc from Institute of Pharmacology, Polish Institute of Science for providing access to the cheminformatics data and sharing knowledge regarding compounds' activity prediction. Finally, we would like to thank Daniel Wilczak, Phd for access to the Fermi supercomputer which made the numerous experiments possible.




\bibliographystyle{IEEEtran}

\bibliography{bibs}
\end{document}